\newcommand*\IdMLast[1]{{\normalfont #1\hspace{-0.04cm}\setminus\hspace{-0.04cm} last}}
\newcommand{\kl}[2]{D_{\mathrm{KL}} \left[ \left. \left. #1 \right|\right| #2 \right] }
\DeclareMathOperator*{\argmax}{arg\,max}
\DeclareMathOperator*{\argmins}{arg\,mins}
\definecolor{blue-violet}{rgb}{0.54, 0.17, 0.89}
\definecolor{bluepigment}{rgb}{0.2, 0.2, 0.6}
\newcommand*\bigcdot{\mathpalette\bigcdot@{.5}}
\newcommand*\bigcdot@[2]{\mathbin{\vcenter{\hbox{\scalebox{#2}{$\m@th#1\bullet$}}}}}
\def\delequal{\mathrel{\ensurestackMath{\stackon[1pt]{=}{\scriptstyle\Delta}}}}
\definecolor{Yellow}{RGB}{211, 176, 15}
\definecolor{Green}{rgb}{0.01, 0.75, 0.24}
\colorlet{Red}{red!50!black}
\colorlet{Blue}{blue!50!black}
\definecolor{Violet}{rgb}{0.56,0.14,0.56}
\def\PYGdefault@reset{\let\PYGdefault@it=\relax \let\PYGdefault@bf=\relax%
    \let\PYGdefault@ul=\relax \let\PYGdefault@tc=\relax%
    \let\PYGdefault@bc=\relax \let\PYGdefault@ff=\relax}
\def\PYGdefault@tok#1{\csname PYGdefault@tok@#1\endcsname}
\def\PYGdefault@toks#1+{\ifx\relax#1\empty\else%
    \PYGdefault@tok{#1}\expandafter\PYGdefault@toks\fi}
\def\PYGdefault@do#1{\PYGdefault@bc{\PYGdefault@tc{\PYGdefault@ul{%
    \PYGdefault@it{\PYGdefault@bf{\PYGdefault@ff{#1}}}}}}}
\def\PYGdefault#1#2{\PYGdefault@reset\PYGdefault@toks#1+\relax+\PYGdefault@do{#2}}
\def\csname PYGdefault@tok@w\endcsname{\def\PYGdefault@tc##1{\textcolor[rgb]{0.73,0.73,0.73}{##1}}}
\def\csname PYGdefault@tok@c\endcsname{\let\PYGdefault@it=\textit\def\PYGdefault@tc##1{\textcolor[rgb]{0.25,0.50,0.50}{##1}}}
\def\csname PYGdefault@tok@cp\endcsname{\def\PYGdefault@tc##1{\textcolor[rgb]{0.74,0.48,0.00}{##1}}}
\def\csname PYGdefault@tok@k\endcsname{\let\PYGdefault@bf=\textbf\def\PYGdefault@tc##1{\textcolor[rgb]{0.00,0.50,0.00}{##1}}}
\def\csname PYGdefault@tok@kp\endcsname{\def\PYGdefault@tc##1{\textcolor[rgb]{0.00,0.50,0.00}{##1}}}
\def\csname PYGdefault@tok@kt\endcsname{\def\PYGdefault@tc##1{\textcolor[rgb]{0.69,0.00,0.25}{##1}}}
\def\csname PYGdefault@tok@o\endcsname{\def\PYGdefault@tc##1{\textcolor[rgb]{0.40,0.40,0.40}{##1}}}
\def\csname PYGdefault@tok@ow\endcsname{\let\PYGdefault@bf=\textbf\def\PYGdefault@tc##1{\textcolor[rgb]{0.67,0.13,1.00}{##1}}}
\def\csname PYGdefault@tok@nb\endcsname{\def\PYGdefault@tc##1{\textcolor[rgb]{0.00,0.50,0.00}{##1}}}
\def\csname PYGdefault@tok@nf\endcsname{\def\PYGdefault@tc##1{\textcolor[rgb]{0.00,0.00,1.00}{##1}}}
\def\csname PYGdefault@tok@nc\endcsname{\let\PYGdefault@bf=\textbf\def\PYGdefault@tc##1{\textcolor[rgb]{0.00,0.00,1.00}{##1}}}
\def\csname PYGdefault@tok@nn\endcsname{\let\PYGdefault@bf=\textbf\def\PYGdefault@tc##1{\textcolor[rgb]{0.00,0.00,1.00}{##1}}}
\def\csname PYGdefault@tok@ne\endcsname{\let\PYGdefault@bf=\textbf\def\PYGdefault@tc##1{\textcolor[rgb]{0.82,0.25,0.23}{##1}}}
\def\csname PYGdefault@tok@nv\endcsname{\def\PYGdefault@tc##1{\textcolor[rgb]{0.10,0.09,0.49}{##1}}}
\def\csname PYGdefault@tok@no\endcsname{\def\PYGdefault@tc##1{\textcolor[rgb]{0.53,0.00,0.00}{##1}}}
\def\csname PYGdefault@tok@nl\endcsname{\def\PYGdefault@tc##1{\textcolor[rgb]{0.63,0.63,0.00}{##1}}}
\def\csname PYGdefault@tok@ni\endcsname{\let\PYGdefault@bf=\textbf\def\PYGdefault@tc##1{\textcolor[rgb]{0.60,0.60,0.60}{##1}}}
\def\csname PYGdefault@tok@na\endcsname{\def\PYGdefault@tc##1{\textcolor[rgb]{0.49,0.56,0.16}{##1}}}
\def\csname PYGdefault@tok@nt\endcsname{\let\PYGdefault@bf=\textbf\def\PYGdefault@tc##1{\textcolor[rgb]{0.00,0.50,0.00}{##1}}}
\def\csname PYGdefault@tok@nd\endcsname{\def\PYGdefault@tc##1{\textcolor[rgb]{0.67,0.13,1.00}{##1}}}
\def\csname PYGdefault@tok@s\endcsname{\def\PYGdefault@tc##1{\textcolor[rgb]{0.73,0.13,0.13}{##1}}}
\def\csname PYGdefault@tok@sd\endcsname{\let\PYGdefault@it=\textit\def\PYGdefault@tc##1{\textcolor[rgb]{0.73,0.13,0.13}{##1}}}
\def\csname PYGdefault@tok@si\endcsname{\let\PYGdefault@bf=\textbf\def\PYGdefault@tc##1{\textcolor[rgb]{0.73,0.40,0.53}{##1}}}
\def\csname PYGdefault@tok@se\endcsname{\let\PYGdefault@bf=\textbf\def\PYGdefault@tc##1{\textcolor[rgb]{0.73,0.40,0.13}{##1}}}
\def\csname PYGdefault@tok@sr\endcsname{\def\PYGdefault@tc##1{\textcolor[rgb]{0.73,0.40,0.53}{##1}}}
\def\csname PYGdefault@tok@ss\endcsname{\def\PYGdefault@tc##1{\textcolor[rgb]{0.10,0.09,0.49}{##1}}}
\def\csname PYGdefault@tok@sx\endcsname{\def\PYGdefault@tc##1{\textcolor[rgb]{0.00,0.50,0.00}{##1}}}
\def\csname PYGdefault@tok@m\endcsname{\def\PYGdefault@tc##1{\textcolor[rgb]{0.40,0.40,0.40}{##1}}}
\def\csname PYGdefault@tok@gh\endcsname{\let\PYGdefault@bf=\textbf\def\PYGdefault@tc##1{\textcolor[rgb]{0.00,0.00,0.50}{##1}}}
\def\csname PYGdefault@tok@gu\endcsname{\let\PYGdefault@bf=\textbf\def\PYGdefault@tc##1{\textcolor[rgb]{0.50,0.00,0.50}{##1}}}
\def\csname PYGdefault@tok@gd\endcsname{\def\PYGdefault@tc##1{\textcolor[rgb]{0.63,0.00,0.00}{##1}}}
\def\csname PYGdefault@tok@gi\endcsname{\def\PYGdefault@tc##1{\textcolor[rgb]{0.00,0.63,0.00}{##1}}}
\def\csname PYGdefault@tok@gr\endcsname{\def\PYGdefault@tc##1{\textcolor[rgb]{1.00,0.00,0.00}{##1}}}
\def\csname PYGdefault@tok@ge\endcsname{\let\PYGdefault@it=\textit}
\def\csname PYGdefault@tok@gs\endcsname{\let\PYGdefault@bf=\textbf}
\def\csname PYGdefault@tok@gp\endcsname{\let\PYGdefault@bf=\textbf\def\PYGdefault@tc##1{\textcolor[rgb]{0.00,0.00,0.50}{##1}}}
\def\csname PYGdefault@tok@go\endcsname{\def\PYGdefault@tc##1{\textcolor[rgb]{0.53,0.53,0.53}{##1}}}
\def\csname PYGdefault@tok@gt\endcsname{\def\PYGdefault@tc##1{\textcolor[rgb]{0.00,0.27,0.87}{##1}}}
\def\csname PYGdefault@tok@err\endcsname{\def\PYGdefault@bc##1{\setlength{\fboxsep}{0pt}\fcolorbox[rgb]{1.00,0.00,0.00}{1,1,1}{\strut ##1}}}
\def\csname PYGdefault@tok@kc\endcsname{\let\PYGdefault@bf=\textbf\def\PYGdefault@tc##1{\textcolor[rgb]{0.00,0.50,0.00}{##1}}}
\def\csname PYGdefault@tok@kd\endcsname{\let\PYGdefault@bf=\textbf\def\PYGdefault@tc##1{\textcolor[rgb]{0.00,0.50,0.00}{##1}}}
\def\csname PYGdefault@tok@kn\endcsname{\let\PYGdefault@bf=\textbf\def\PYGdefault@tc##1{\textcolor[rgb]{0.00,0.50,0.00}{##1}}}
\def\csname PYGdefault@tok@kr\endcsname{\let\PYGdefault@bf=\textbf\def\PYGdefault@tc##1{\textcolor[rgb]{0.00,0.50,0.00}{##1}}}
\def\csname PYGdefault@tok@bp\endcsname{\def\PYGdefault@tc##1{\textcolor[rgb]{0.00,0.50,0.00}{##1}}}
\def\csname PYGdefault@tok@fm\endcsname{\def\PYGdefault@tc##1{\textcolor[rgb]{0.00,0.00,1.00}{##1}}}
\def\csname PYGdefault@tok@vc\endcsname{\def\PYGdefault@tc##1{\textcolor[rgb]{0.10,0.09,0.49}{##1}}}
\def\csname PYGdefault@tok@vg\endcsname{\def\PYGdefault@tc##1{\textcolor[rgb]{0.10,0.09,0.49}{##1}}}
\def\csname PYGdefault@tok@vi\endcsname{\def\PYGdefault@tc##1{\textcolor[rgb]{0.10,0.09,0.49}{##1}}}
\def\csname PYGdefault@tok@vm\endcsname{\def\PYGdefault@tc##1{\textcolor[rgb]{0.10,0.09,0.49}{##1}}}
\def\csname PYGdefault@tok@sa\endcsname{\def\PYGdefault@tc##1{\textcolor[rgb]{0.73,0.13,0.13}{##1}}}
\def\csname PYGdefault@tok@sb\endcsname{\def\PYGdefault@tc##1{\textcolor[rgb]{0.73,0.13,0.13}{##1}}}
\def\csname PYGdefault@tok@sc\endcsname{\def\PYGdefault@tc##1{\textcolor[rgb]{0.73,0.13,0.13}{##1}}}
\def\csname PYGdefault@tok@dl\endcsname{\def\PYGdefault@tc##1{\textcolor[rgb]{0.73,0.13,0.13}{##1}}}
\def\csname PYGdefault@tok@s2\endcsname{\def\PYGdefault@tc##1{\textcolor[rgb]{0.73,0.13,0.13}{##1}}}
\def\csname PYGdefault@tok@sh\endcsname{\def\PYGdefault@tc##1{\textcolor[rgb]{0.73,0.13,0.13}{##1}}}
\def\csname PYGdefault@tok@s1\endcsname{\def\PYGdefault@tc##1{\textcolor[rgb]{0.73,0.13,0.13}{##1}}}
\def\csname PYGdefault@tok@mb\endcsname{\def\PYGdefault@tc##1{\textcolor[rgb]{0.40,0.40,0.40}{##1}}}
\def\csname PYGdefault@tok@mf\endcsname{\def\PYGdefault@tc##1{\textcolor[rgb]{0.40,0.40,0.40}{##1}}}
\def\csname PYGdefault@tok@mh\endcsname{\def\PYGdefault@tc##1{\textcolor[rgb]{0.40,0.40,0.40}{##1}}}
\def\csname PYGdefault@tok@mi\endcsname{\def\PYGdefault@tc##1{\textcolor[rgb]{0.40,0.40,0.40}{##1}}}
\def\csname PYGdefault@tok@il\endcsname{\def\PYGdefault@tc##1{\textcolor[rgb]{0.40,0.40,0.40}{##1}}}
\def\csname PYGdefault@tok@mo\endcsname{\def\PYGdefault@tc##1{\textcolor[rgb]{0.40,0.40,0.40}{##1}}}
\def\csname PYGdefault@tok@ch\endcsname{\let\PYGdefault@it=\textit\def\PYGdefault@tc##1{\textcolor[rgb]{0.25,0.50,0.50}{##1}}}
\def\csname PYGdefault@tok@cm\endcsname{\let\PYGdefault@it=\textit\def\PYGdefault@tc##1{\textcolor[rgb]{0.25,0.50,0.50}{##1}}}
\def\csname PYGdefault@tok@cpf\endcsname{\let\PYGdefault@it=\textit\def\PYGdefault@tc##1{\textcolor[rgb]{0.25,0.50,0.50}{##1}}}
\def\csname PYGdefault@tok@c1\endcsname{\let\PYGdefault@it=\textit\def\PYGdefault@tc##1{\textcolor[rgb]{0.25,0.50,0.50}{##1}}}
\def\csname PYGdefault@tok@cs\endcsname{\let\PYGdefault@it=\textit\def\PYGdefault@tc##1{\textcolor[rgb]{0.25,0.50,0.50}{##1}}}
\def\PYG@reset{\let\PYG@it=\relax \let\PYG@bf=\relax%
    \let\PYG@ul=\relax \let\PYG@tc=\relax%
    \let\PYG@bc=\relax \let\PYG@ff=\relax}
\def\PYG@tok#1{\csname PYG@tok@#1\endcsname}
\def\PYG@toks#1+{\ifx\relax#1\empty\else%
    \PYG@tok{#1}\expandafter\PYG@toks\fi}
\def\PYG@do#1{\PYG@bc{\PYG@tc{\PYG@ul{%
    \PYG@it{\PYG@bf{\PYG@ff{#1}}}}}}}
\def\PYG#1#2{\PYG@reset\PYG@toks#1+\relax+\PYG@do{#2}}
\def\csname PYG@tok@w\endcsname{\def\PYG@tc##1{\textcolor[rgb]{0.73,0.73,0.73}{##1}}}
\def\csname PYG@tok@c\endcsname{\let\PYG@it=\textit\def\PYG@tc##1{\textcolor[rgb]{0.25,0.50,0.50}{##1}}}
\def\csname PYG@tok@cp\endcsname{\def\PYG@tc##1{\textcolor[rgb]{0.74,0.48,0.00}{##1}}}
\def\csname PYG@tok@k\endcsname{\let\PYG@bf=\textbf\def\PYG@tc##1{\textcolor[rgb]{0.00,0.50,0.00}{##1}}}
\def\csname PYG@tok@kp\endcsname{\def\PYG@tc##1{\textcolor[rgb]{0.00,0.50,0.00}{##1}}}
\def\csname PYG@tok@kt\endcsname{\def\PYG@tc##1{\textcolor[rgb]{0.69,0.00,0.25}{##1}}}
\def\csname PYG@tok@o\endcsname{\def\PYG@tc##1{\textcolor[rgb]{0.40,0.40,0.40}{##1}}}
\def\csname PYG@tok@ow\endcsname{\let\PYG@bf=\textbf\def\PYG@tc##1{\textcolor[rgb]{0.67,0.13,1.00}{##1}}}
\def\csname PYG@tok@nb\endcsname{\def\PYG@tc##1{\textcolor[rgb]{0.00,0.50,0.00}{##1}}}
\def\csname PYG@tok@nf\endcsname{\def\PYG@tc##1{\textcolor[rgb]{0.00,0.00,1.00}{##1}}}
\def\csname PYG@tok@nc\endcsname{\let\PYG@bf=\textbf\def\PYG@tc##1{\textcolor[rgb]{0.00,0.00,1.00}{##1}}}
\def\csname PYG@tok@nn\endcsname{\let\PYG@bf=\textbf\def\PYG@tc##1{\textcolor[rgb]{0.00,0.00,1.00}{##1}}}
\def\csname PYG@tok@ne\endcsname{\let\PYG@bf=\textbf\def\PYG@tc##1{\textcolor[rgb]{0.82,0.25,0.23}{##1}}}
\def\csname PYG@tok@nv\endcsname{\def\PYG@tc##1{\textcolor[rgb]{0.10,0.09,0.49}{##1}}}
\def\csname PYG@tok@no\endcsname{\def\PYG@tc##1{\textcolor[rgb]{0.53,0.00,0.00}{##1}}}
\def\csname PYG@tok@nl\endcsname{\def\PYG@tc##1{\textcolor[rgb]{0.63,0.63,0.00}{##1}}}
\def\csname PYG@tok@ni\endcsname{\let\PYG@bf=\textbf\def\PYG@tc##1{\textcolor[rgb]{0.60,0.60,0.60}{##1}}}
\def\csname PYG@tok@na\endcsname{\def\PYG@tc##1{\textcolor[rgb]{0.49,0.56,0.16}{##1}}}
\def\csname PYG@tok@nt\endcsname{\let\PYG@bf=\textbf\def\PYG@tc##1{\textcolor[rgb]{0.00,0.50,0.00}{##1}}}
\def\csname PYG@tok@nd\endcsname{\def\PYG@tc##1{\textcolor[rgb]{0.67,0.13,1.00}{##1}}}
\def\csname PYG@tok@s\endcsname{\def\PYG@tc##1{\textcolor[rgb]{0.73,0.13,0.13}{##1}}}
\def\csname PYG@tok@sd\endcsname{\let\PYG@it=\textit\def\PYG@tc##1{\textcolor[rgb]{0.73,0.13,0.13}{##1}}}
\def\csname PYG@tok@si\endcsname{\let\PYG@bf=\textbf\def\PYG@tc##1{\textcolor[rgb]{0.73,0.40,0.53}{##1}}}
\def\csname PYG@tok@se\endcsname{\let\PYG@bf=\textbf\def\PYG@tc##1{\textcolor[rgb]{0.73,0.40,0.13}{##1}}}
\def\csname PYG@tok@sr\endcsname{\def\PYG@tc##1{\textcolor[rgb]{0.73,0.40,0.53}{##1}}}
\def\csname PYG@tok@ss\endcsname{\def\PYG@tc##1{\textcolor[rgb]{0.10,0.09,0.49}{##1}}}
\def\csname PYG@tok@sx\endcsname{\def\PYG@tc##1{\textcolor[rgb]{0.00,0.50,0.00}{##1}}}
\def\csname PYG@tok@m\endcsname{\def\PYG@tc##1{\textcolor[rgb]{0.40,0.40,0.40}{##1}}}
\def\csname PYG@tok@gh\endcsname{\let\PYG@bf=\textbf\def\PYG@tc##1{\textcolor[rgb]{0.00,0.00,0.50}{##1}}}
\def\csname PYG@tok@gu\endcsname{\let\PYG@bf=\textbf\def\PYG@tc##1{\textcolor[rgb]{0.50,0.00,0.50}{##1}}}
\def\csname PYG@tok@gd\endcsname{\def\PYG@tc##1{\textcolor[rgb]{0.63,0.00,0.00}{##1}}}
\def\csname PYG@tok@gi\endcsname{\def\PYG@tc##1{\textcolor[rgb]{0.00,0.63,0.00}{##1}}}
\def\csname PYG@tok@gr\endcsname{\def\PYG@tc##1{\textcolor[rgb]{1.00,0.00,0.00}{##1}}}
\def\csname PYG@tok@ge\endcsname{\let\PYG@it=\textit}
\def\csname PYG@tok@gs\endcsname{\let\PYG@bf=\textbf}
\def\csname PYG@tok@gp\endcsname{\let\PYG@bf=\textbf\def\PYG@tc##1{\textcolor[rgb]{0.00,0.00,0.50}{##1}}}
\def\csname PYG@tok@go\endcsname{\def\PYG@tc##1{\textcolor[rgb]{0.53,0.53,0.53}{##1}}}
\def\csname PYG@tok@gt\endcsname{\def\PYG@tc##1{\textcolor[rgb]{0.00,0.27,0.87}{##1}}}
\def\csname PYG@tok@err\endcsname{\def\PYG@bc##1{\setlength{\fboxsep}{0pt}\fcolorbox[rgb]{1.00,0.00,0.00}{1,1,1}{\strut ##1}}}
\def\csname PYG@tok@kc\endcsname{\let\PYG@bf=\textbf\def\PYG@tc##1{\textcolor[rgb]{0.00,0.50,0.00}{##1}}}
\def\csname PYG@tok@kd\endcsname{\let\PYG@bf=\textbf\def\PYG@tc##1{\textcolor[rgb]{0.00,0.50,0.00}{##1}}}
\def\csname PYG@tok@kn\endcsname{\let\PYG@bf=\textbf\def\PYG@tc##1{\textcolor[rgb]{0.00,0.50,0.00}{##1}}}
\def\csname PYG@tok@kr\endcsname{\let\PYG@bf=\textbf\def\PYG@tc##1{\textcolor[rgb]{0.00,0.50,0.00}{##1}}}
\def\csname PYG@tok@bp\endcsname{\def\PYG@tc##1{\textcolor[rgb]{0.00,0.50,0.00}{##1}}}
\def\csname PYG@tok@fm\endcsname{\def\PYG@tc##1{\textcolor[rgb]{0.00,0.00,1.00}{##1}}}
\def\csname PYG@tok@vc\endcsname{\def\PYG@tc##1{\textcolor[rgb]{0.10,0.09,0.49}{##1}}}
\def\csname PYG@tok@vg\endcsname{\def\PYG@tc##1{\textcolor[rgb]{0.10,0.09,0.49}{##1}}}
\def\csname PYG@tok@vi\endcsname{\def\PYG@tc##1{\textcolor[rgb]{0.10,0.09,0.49}{##1}}}
\def\csname PYG@tok@vm\endcsname{\def\PYG@tc##1{\textcolor[rgb]{0.10,0.09,0.49}{##1}}}
\def\csname PYG@tok@sa\endcsname{\def\PYG@tc##1{\textcolor[rgb]{0.73,0.13,0.13}{##1}}}
\def\csname PYG@tok@sb\endcsname{\def\PYG@tc##1{\textcolor[rgb]{0.73,0.13,0.13}{##1}}}
\def\csname PYG@tok@sc\endcsname{\def\PYG@tc##1{\textcolor[rgb]{0.73,0.13,0.13}{##1}}}
\def\csname PYG@tok@dl\endcsname{\def\PYG@tc##1{\textcolor[rgb]{0.73,0.13,0.13}{##1}}}
\def\csname PYG@tok@s2\endcsname{\def\PYG@tc##1{\textcolor[rgb]{0.73,0.13,0.13}{##1}}}
\def\csname PYG@tok@sh\endcsname{\def\PYG@tc##1{\textcolor[rgb]{0.73,0.13,0.13}{##1}}}
\def\csname PYG@tok@s1\endcsname{\def\PYG@tc##1{\textcolor[rgb]{0.73,0.13,0.13}{##1}}}
\def\csname PYG@tok@mb\endcsname{\def\PYG@tc##1{\textcolor[rgb]{0.40,0.40,0.40}{##1}}}
\def\csname PYG@tok@mf\endcsname{\def\PYG@tc##1{\textcolor[rgb]{0.40,0.40,0.40}{##1}}}
\def\csname PYG@tok@mh\endcsname{\def\PYG@tc##1{\textcolor[rgb]{0.40,0.40,0.40}{##1}}}
\def\csname PYG@tok@mi\endcsname{\def\PYG@tc##1{\textcolor[rgb]{0.40,0.40,0.40}{##1}}}
\def\csname PYG@tok@il\endcsname{\def\PYG@tc##1{\textcolor[rgb]{0.40,0.40,0.40}{##1}}}
\def\csname PYG@tok@mo\endcsname{\def\PYG@tc##1{\textcolor[rgb]{0.40,0.40,0.40}{##1}}}
\def\csname PYG@tok@ch\endcsname{\let\PYG@it=\textit\def\PYG@tc##1{\textcolor[rgb]{0.25,0.50,0.50}{##1}}}
\def\csname PYG@tok@cm\endcsname{\let\PYG@it=\textit\def\PYG@tc##1{\textcolor[rgb]{0.25,0.50,0.50}{##1}}}
\def\csname PYG@tok@cpf\endcsname{\let\PYG@it=\textit\def\PYG@tc##1{\textcolor[rgb]{0.25,0.50,0.50}{##1}}}
\def\csname PYG@tok@c1\endcsname{\let\PYG@it=\textit\def\PYG@tc##1{\textcolor[rgb]{0.25,0.50,0.50}{##1}}}
\def\csname PYG@tok@cs\endcsname{\let\PYG@it=\textit\def\PYG@tc##1{\textcolor[rgb]{0.25,0.50,0.50}{##1}}}
\begin{document}

\title{Branching Time Active Inference:\\
{\small the theory and its generality}}

\author{\name Théophile Champion \email tmac3@kent.ac.uk \\
       \addr University of Kent, School of Computing\\
       Canterbury CT2 7NZ, United Kingdom
       \AND
       \name Lancelot Da Costa \email l.da-costa@imperial.ac.uk \\
       \addr Imperial College London, Department of Mathematics\\
       London SW7 2AZ, United Kingdom\\
       Wellcome Centre for Human Neuroimaging, University College London\\
       London, WC1N 3AR, United Kingdom
       \AND
       \name Howard Bowman \email H.Bowman@kent.ac.uk \\
       \addr University of Birmingham, School of Psychology,\\
       Birmingham B15 2TT, United Kingdom\\
       University of Kent, School of Computing\\
       Canterbury CT2 7NZ, United Kingdom
       \AND
       \name Marek Grze\'s \email m.grzes@kent.ac.uk \\
       \addr University of Kent, School of Computing\\
       Canterbury CT2 7NZ, United Kingdom
       }
       
\editor{\textbf{TO BE FILLED}}

\maketitle

\begin{abstract}% <- trailing '%' for backward compatibility of .sty file
Over the last 10 to 15 years, active inference has helped to explain various brain mechanisms from habit formation to dopaminergic discharge and even modelling curiosity. However, the current implementations suffer from an exponential (space and time) complexity class when computing the prior over all the possible policies up to the time-horizon. \citet{DeepAIwithMCMC} used Monte Carlo tree search to address this problem, leading to impressive results in two different tasks. In this paper, we present an alternative framework that aims to unify tree search and active inference by casting planning as a structure learning problem. Two tree search algorithms are then presented. The first propagates the expected free energy forward in time (i.e., towards the leaves), while the second propagates it backward (i.e., towards the root). Then, we demonstrate that forward and backward propagations are related to active inference and sophisticated inference, respectively, thereby clarifying the differences between those two planning strategies.
\end{abstract}

\begin{keywords}
Active Inference, Variational Message Passing, Tree Search, Planning, Free Energy Principle
\end{keywords}

\section{Introduction}

Active inference is at this point a compelling explanatory approach in cognitive neuroscience, and significant analyses of biologically-realistic implementations in both neural and non-neural communication networks has been conducted. More specifically, active inference extends the free energy principle to generative models with actions \citep{FRISTON2016862,AI_TUTO,AI_VMP} and can be regarded as a form of planning as inference \citep{PAI}. This framework has successfully explained a wide range of neuro-cognitive phenomena, such as habit formation \citep{FRISTON2016862}, Bayesian surprise \citep{bayes_surprise}, curiosity \citep{curiosity}, and dopaminergic discharges \citep{dopamine}. It has also been applied to a variety of tasks, such as animal navigation \citep{DeepAIwithMCMC}, robotic control \citep{pezzato2020active,sancaktar2020endtoend}, the mountain car problem \citep{catal2020learning}, the game of DOOM \citep{CULLEN2018809} and the cart pole problem \citep{cart_pole}. Many of those applications require planning several steps into the future in order to be solved successfully. However, as explained in more depth in appendix H, an exhaustive search over all possible sequences of actions will quickly become intractable, i.e., the number of sequences to explore grows exponentially with the time horizon of planning. Figure \ref{fig:exp_class_illustration} illustrates this exponential growth. Exploring only a subset of this exponential number of possible sequences using a tree search therefore becomes a compelling and quite natural alternative.

\begin{figure}[H]
	\begin{center}
	\includegraphics[scale=0.8]{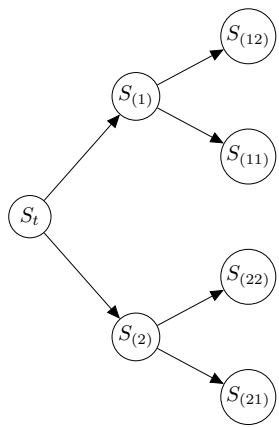}
	\end{center}
    \caption{Illustration of all possible policies up to two time steps in the future when $|U| = 2$. The state at the current time step is denoted by $S_t$. Additionally, each branch of the tree corresponds to a possible policy, and each node $S_I$ is indexed by a multi-index (e.g. $I=(12)$) representing the sequence of actions that led to this state. This should make it clear that for one time step in the future, there are $|U|$ possible policies, after two time steps there are $|U|$ times more policies, and so on until the time-horizon $T$ where there are a total of $|U|^T$ possible policies, i.e., the number of possible policies grows exponentially with the number of time steps for which the agent tries to plan.}
    \label{fig:exp_class_illustration}
\end{figure}

But what exactly is active inference? Imagine a basketball player at the top of the key (i.e., the area just below the net) ready to take a shot. Intuitively, active inference sees the world as a collection of external states such as the positions of the net, the player and the ball. The player (or agent) is equipped with sensors (such as the eyes) which allow for measurements of the external states. The player is also able to perform actions in the world such as to perform sudden eye movement or simply unfolding his (or her) arms and legs. Furthermore, it is believed that the agent stores an internal representation of the external states, that we shall refer to as the internal states. Importantly, the external and internal states are separated from each other by the Markov blanket \citep{MarkovBlanket}, i.e., the sensory information received and actions taken by the agent. In other words, the external states can only modify the internal states indirectly through the observations (also called sensory information) made by the agent, and the internal states can only modify the external states indirectly through the actions taken by the agent. 

More formally, active inference builds on a subfield of Bayesian statistics called variational inference \citep{VI_TUTO}, in which the true posterior distribution is approximated with a variational distribution. This method provides a way to balance the complexity and accuracy of the posterior distribution. The variational approach is only tractable because some statistical dependencies are ignored during the inference process, i.e., the variational distribution is generally assumed to fully factorise, leading to the well known mean-field approximation:
\begin{align}
Q(X) = \prod_{i} Q(X_i),
\end{align}
where $X$ is the set of all hidden variables of the model, $X_i$ represents the i-th hidden variable, $Q(X)$ is the variational distribution (see below) approximating the posterior $P(X|O)$ where $O$ is the available data, and $Q(X_i)$ is the $i$-th factor of the variational distribution. In 2005, \citet{VMP_TUTO} presented a message-based implementation of variational inference, which has naturally been called variational message passing. And more recently, \citet{AI_VMP} realised an active inference scheme using this variational message passing procedure. By combining the Forney factor graph formalism \citep{FFG_TUTO} with the method of \citet{VMP_TUTO}, it becomes possible to create modular implementations of active inference \citep{Simul_AI,DBLP:journals/ijar/CoxLV19} that allows users to define their own generative models without the burden of deriving update equations.

However, as just stated, there is a major bottleneck to scaling up the active inference framework: the number of action sequences grows exponentially with the time-horizon (see Appendix H for details). In the reinforcement learning literature, this explosion is frequently handled using Monte Carlo tree search (MCTS) \citep{Go,6145622,MuZero}. This approach has been applied to active inference in several papers \citep{DeepAIwithMCMC,LargePOMDP}. \citet{DeepAIwithMCMC} chose to modify the original criterion used during the node selection step in MCTS. This step returns the node that needs to be expanded, and the reinforcement learning community uses the upper confidence bound for trees (UCT) introduced by \citet{DBLP:conf/ecml/KocsisS06} as a selection criterion:
\begin{equation}\label{eq:uct0}
UCT_j = \bar{X}_j + 2 C_p \sqrt{\frac{2\ln n}{n_j}},
\end{equation}
where $n$ is the number of times the current (parent) node has been explored; $n_j$ stands for the number of times the j-th child node has been explored; $C_p > 0$ is the exploration constant and $\bar{X}_j$ is the average reward received by the j-th child, i.e., the sum of all rewards received by the current node and its descendants divided by $n_j$. The child node with the largest $UCT_j$ is selected. In their paper, \citet{DeepAIwithMCMC} replaced this selection criterion by:
\begin{equation}\label{eq:uct1}
U(s, a) = -\tilde{G}(s, a) + C_{\text{explore}} \,\, Q(a|s) \,\, \frac{1}{1 + N(s, a)}
\end{equation}
where $U(s, a)$ indicates the utility of selecting action $a$ in state $s$; $N(s, a)$ is the number of times that action $a$ was explored in state $s$; $C_{\text{explore}}$ is an exploration constant equivalent to $C_p$ in the UCT criterion; $Q(a|s)$ is a neural network modelling the posterior distribution over actions, which is trained by minimizing the variational free energy, and $\tilde{G}(s, a)$ is an estimator of the expected free energy (EFE). The EFE is computed from the following equation:
\begin{align}
G(\pi, \tau) = &- \mathbb{E}_{Q(\theta|\pi)Q(s_\tau|\theta,\pi)Q(o_\tau|s_\tau,\theta,\pi)}\Big[\ln P(o_\tau|\pi)\Big]\\
&+ \mathbb{E}_{Q(\theta|\pi)}\Big[\mathbb{E}_{Q(o_\tau|\theta,\pi)}H(s_\tau|o_\tau,\pi) - H(s_\tau|\pi)\Big]\\
&+ \mathbb{E}_{Q(\theta|\pi)Q(s_\tau|\theta,\pi)}H(o_\tau|s_\tau,\theta,\pi) - \mathbb{E}_{Q(s_\tau|\pi)}H(o_\tau|s_\tau,\pi),
\end{align}
where $H(x|y)$ is the entropy of $p(x|y)$. The computation of the EFE is performed by sampling from three distributions whose parameters are predicted by deep neural networks, i.e., the encoder network modelling $Q(s_\tau)$, the decoder network modelling $P(o_\tau|s_\tau)$ and the transition network modelling $P(s_\tau|s_{\tau-1},a_{\tau-1})$. Note that Equation \eqref{eq:uct0} was developed by \citet{DBLP:conf/ecml/KocsisS06} as a criterion for selecting nodes during planning, such that the selected node minimizes the agent's regret (c.f. Appendix G for additional details). Equation \eqref{eq:uct1} finds its origin in the Predictor Upper Confidence Bound (PUCB) algorithm introduced by \citet{PUCB}. The idea of the PUCB algorithm is to use contextual information to predict the node to select during planning. Equations \eqref{eq:uct0} and \eqref{eq:uct1} both aim to select the node that minimizes the agent's regret, and can therefore be used interchangeably. However, Equation \eqref{eq:uct1} requires contextual information and a model predicting the node to be selected. \citet{DeepAIwithMCMC} proposed to use the neural network modelling $Q(a|s)$ as a predictor. This has the advantage of making the predictor very flexible, since neural networks are known to be general function approximators, but neural networks are also expensive to train and lack interpretability.

To avoid the additional complexity brought by the predictor, this paper makes use of \eqref{eq:uct0}, which arises from the multi-armed bandit literature \citep{Auer2002}. The idea is to minimise the agent's regret to handle the trade-off between exploration and exploitation at the tree-level in an optimal manner.

A major novelty of our paper is to think about tree search as a dynamical expansion of the generative model, where the past and present is modelled as a partially observable Markov decision process \citep{POMDP_THESIS} and the future is modelled by a tree-like generative model. Importantly, our agent treats future states and observations as latent variables over which posterior beliefs are computed, and those beliefs encode the uncertainty of our agent over future states. In contrast, \citet{DeepAIwithMCMC} are using a maximum a posteriori (MAP) estimate of the future hidden states, while performing MCTS. Lastly, the posterior beliefs held by our agent are computed using variational message passing as presented in \citep{AI_VMP}. In comparison, \citet{DeepAIwithMCMC} perform amortized inference using an encoder network that predicts the mean and variance of the posterior distribution over latent states. Then, (during planning) a MAP estimate is used as input for the neural network modelling the temporal transition. All those neural networks are trained using gradient descent on the variational free energy.

Overall, the key contribution of our paper is to use MCTS to expand or grow the probabilistic graphical model, treat future states and observations as latent variables, and do inference using variational message passing. Indeed, the definition in \citep{AI_VMP} of a general message passing procedure for performing active inference makes it possible to construct graphical active inference models in a modular fashion. In turn, this makes it possible to incrementally expand an active inference model as required of our MCTS procedure. It is this message passing procedure that makes our approach possible. To our knowledge, an approach of this kind has never been studied before.

In the following, we first provide the requisite background concerning Forney factor graphs, variational message passing, active inference, and Monte Carlo tree search in Sections \ref{sec:ffg}, \ref{sec:vmp}, \ref{sec:ai}, and \ref{sec:MCTS}, respectively. Next, Section \ref{sec:ai_ts} introduces our method that frames planning using a tree as a form of Bayesian model extension. Using terminology from concurrency theory \citep{concurrency_howard}, we call our new formalism \textit{Branching Time Active Inference} (BTAI). In this domain, models of systems based upon sequences of actions (the format of policies) are described as \textit{linear time}, while models based upon tree and even graph structures are called \textit{branching time} \citep{concurrency_glabbeek,concurrency_glabbeek_2,concurrency_howard}. Importantly, BTAI does not consider the generative model and the tree as two different objects, instead, BTAI merges those two objects together into a generative model that can be dynamically expanded. For a detailed analysis of the properties of BTAI, the reader is referred to our companion paper \citep{AITS_PRACTICE}, which provides an empirical demonstration of the benefits of BTAI over standard active inference (AcI) in the context of a graph navigation task. This companion paper also supplies a theoretical comparison of BTAI and standard AcI based upon a complexity class analysis. Briefly, standard AcI has a space complexity class of $O(|\pi| \times T \times |S|)$, where $|\pi| = |U|^T$ is the number of possible policies, $|U|$ is the number of available actions, $T$ is the time horizon of planning, and $|S|$ is the number of values that the hidden state can take. In contrast, the space complexity class of BTAI is $O([K + t] \times |S|)$, where $t$ is the current (i.e. present) time point, and $K$ is the number of expansions of the tree performed during planning. Importantly, even complex applications such as the game of Go can be solved by expanding only a small number of nodes \citep{Go,MuZero}. Section \ref{sec:ai_ts} is followed by Section \ref{sec:aits_ai_sai} that explains the connection between our method and the planning strategies used in both active inference and sophisticated inference \citep{Sophisticated_INF}. Finally, Section \ref{sec:conclusion} concludes this paper and provides ideas for future research.

\section{Forney Factor Graphs}\label{sec:ffg}

A Forney factor graph \citep{FFG_TUTO} uses three kinds of nodes. The nodes representing hidden and observed variables are depicted by white and gray circles, respectively. And the distribution's factors are represented using white squares, which are linked to variable nodes by arrows or lines. Arrows are used to connect factors to their target variable, while lines link factors to their predictors. Figure \ref{fig:type-elements-ffg} shows an example of a Forney factor graph corresponding to the following generative model:
\begin{align} \label{eq:general-generative-model}
P(O,S) = {\color{red!90!black}P_{O}}(O|S){\color{blue!50!black}P_{S}}(S).
\end{align}

Generally, factor graphs only describe the model's structure such as the variables and their dependencies, but do not specify the definition of individual factors. For example, the definitions of ${\color{red!90!black}P_{O}}$ and ${\color{blue!50!black}P_{S}}$ are not given by Figure \ref{fig:type-elements-ffg}, and additional information is required to remove the ambiguity, e.g., ${\color{blue!50!black}P_{S}}(S) = \mathcal{N}(S;\mu, \sigma)$ clarifies that ${\color{blue!50!black}P_{S}}$ is a Gaussian distribution.

\begin{figure}[H]
	\begin{center}
	\includegraphics[scale=1]{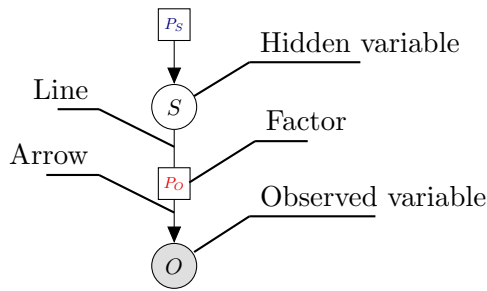}
	\end{center}
	\caption[Forney Factor Graph]{This figure illustrates the Forney factor graph corresponding to the following generative model: $P(O,S) = {\color{red!90!black}P_{O}}(O|S){\color{blue!50!black}P_{S}}(S)$. The hidden state is represented by a white circle with the variable's name at the center, and the observed variable is depicted similarly but with a gray background. The factors of the generative model are represented by squares with a white background and the factor's name at the center. Finally, arrows connect the factors to their target variable and lines link each factor to its predictor variables.}
    \label{fig:type-elements-ffg}
\end{figure}

\section{Variational Message Passing}\label{sec:vmp}

We now build on Forney factor graphs and provide an overview of the method of \citet{VMP_TUTO}. For more details, see \citet{AI_VMP}, which provided a complete derivation of the equations presented below from Bayes' theorem.

\subsection{Winn and Bishop method}

Variational message passing as developed by \citet{VMP_TUTO} is an approach for inference based upon the mean-field approximation, which assumes that the posterior fully factorises, i.e.
\begin{align}
Q(X) = \prod_{i} Q(X_i),
\end{align}
where $X$ is the set of all hidden variables of the model and $X_i$ represents the $i$-th hidden variable. In this section, we focus on the intuition behind the method, starting with the update equation of an arbitrary hidden state $x_k$:
\begin{align}\label{eq:5}
\ln Q_k^*(x_k) = \langle \ln P(x_k|{\color{purple}\text{pa}_k}) \rangle_{\sim Q_k} + \sum_{c_j \in {\color{red}\text{ch}_k}} \langle \ln P(c_j|x_k, {\color{violet}\text{cp}_{kj}}) \rangle_{\sim Q_k} + C
\end{align}
where $C$ is a normalizing constant, and $\langle \cdot \rangle_{\sim Q_k}$ is the expectation over all factors but $Q_k(x_k)$. \eqref{eq:5} tells us that the optimal posterior of any hidden states $x_k$ only depends on its Markov blanket, i.e., $x_k$'s parents ${\color{purple}\text{pa}_k}$, children ${\color{red}\text{ch}_k}$ and co-parents ${\color{violet}\text{cp}_{kj}}$. To make \eqref{eq:5} more specific, we assume that each random variable of the model is conjugate to its parents (i.e., the posterior has the same functional form as the prior) and is distributed according to a distribution in the exponential family, i.e., 
\begin{align}\label{eq:prior_VMP}
\ln P(x_k|\text{pa}_k) = \mu_k(\text{pa}_k) \cdot u_k(x_k) + h_k(x_k) + z_k(\text{pa}_k)
\end{align}
where $\mu_k(\text{pa}_k)$, $u_k(x_k)$, $h_k(x_k)$ and $z_k(\text{pa}_k)$ are the parameters, the sufficient statistics, the underlying measure and the log partition, respectively. Under those two assumptions, \eqref{eq:5} can be re-written as:
\begin{align}\label{eq:posterior_VMP}
Q_k^*(x_k) &= \exp \Bigg\{ {\color{blue}\mu_k^*} \cdot {\color{orange}u_k}(x_k) + {\color{red}h_k}(x_k) + \text{Const} \Bigg\}
\end{align}
\begin{align}\label{eq:6}
{\color{blue}\mu_k^*} = {\color{purple}\tilde{\mu}_{k}}(\{{\color{gray}\langle u_{i}(i) \rangle_{Q_{i}}} \}_{i \in \text{pa}_k} ) + \sum_{c_j \in \text{ch}_k} {\color{violet}\tilde{\mu}_{j \rightarrow k}}({\color{gray}\langle u_j(c_j) \rangle_{Q_j}}, \{{\color{gray}\langle u_{l}(l)\rangle_{Q_l}} \}_{l \in \text{cp}_{kj}} )
\end{align}
where ${\color{purple}\tilde{\mu}_{k}}$ is a re-parameterization of ${\color{purple}\mu_k}(pa_k)$ in terms of the expectation of the sufficient statistics of the parents of $x_k$, and similarly ${\color{violet}\tilde{\mu}_{j \rightarrow k}}$ is a re-parameterization of ${\color{violet}\mu_{j \rightarrow k}}$. Importantly, ${\color{orange}u_k}(x_k)$ and ${\color{red}h_k}(x_k)$ in the optimal posterior \eqref{eq:posterior_VMP} are the same as in the prior \eqref{eq:prior_VMP}, and only the parameters have changed according to \eqref{eq:6}.

\begin{figure}[H]
	\begin{center}
	\includegraphics[scale=1]{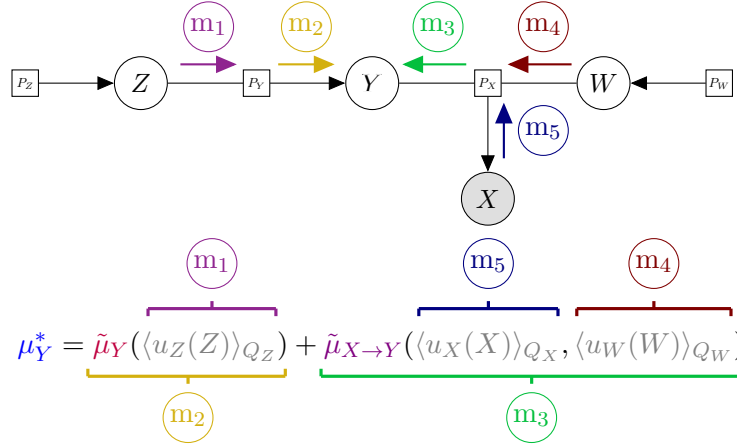}
	\end{center}
  \caption{This figure illustrates the computation of the optimal posterior parameters as a message passing procedure, which requires the transmission of messages from the parent (${\color{Yellow}\text{m}_2}$) and child (${\color{Green}\text{m}_3}$) factors. Additionally, the message from the child factor (${\color{Green}\text{m}_3}$) requires the computation of messages from the co-parent (${\color{Red}\text{m}_4}$) and child (${\color{Blue}\text{m}_5}$) variables. Also, the message from the parent factor (${\color{Yellow}\text{m}_2}$) requires the computation of a message (${\color{Violet}\text{m}_1}$) from the parent variable.}
   \label{fig:messages-ffg}
\end{figure}

To understand the intuition behind (\ref{eq:6}), let us suppose that we are given the Forney factor graph illustrated in Figure \ref{fig:messages-ffg} and we wish to compute the posterior of $Y$. Then, the only parent of $Y$ is $Z$, the only child of $Y$ is $X$ and the only co-parent of $Y$ with respect to $X$ is $W$. Therefore, applying \eqref{eq:6} to our example leads to the equation presented in Figure \ref{fig:messages-ffg} whose components can be interpreted as messages. Indeed, each variable (i.e., $X$, $Z$ and $W$) sends the expectation of its sufficient statistics (i.e., a message) to the square node in the direction of Y (i.e., either $P_X$ or $P_Y$). Those messages are then combined using a function (i.e., either $\tilde{\mu}_{Y}$ or $\tilde{\mu}_{X \rightarrow Y}$) whose output (i.e., another set of messages) are summed to obtain the optimal parameters $\mu_Y^*$. The computation of the optimal parameters (\ref{eq:6}) can then be understood as a message passing procedure. Also, we provide in Appendix C a concrete instance of the approach presented above.

\section{Active Inference}\label{sec:ai}

This section provides a quick overview of the active inference framework, and Appendix H presents a description of the exponential complexity class that it exhibits. The reader is referred to Appendix F for any notations that might not be explained here. For a more detailed treatment of the active inference framework, we refer the reader to \citep{AI_VMP,AI_TUTO,TUTO_AI_RYAN}.

\subsection{Generative model} \label{ssec:GM}

As illustrated in Figure \ref{fig:full_GM}, the classic generative model represents the world as a sequence of hidden states generating observations through the matrix $\bm{A}$. The prior over the initial states is defined by the vector $\bm{D}$ and the transition between time steps is encoded by a 3-tensor $\bm{B}$, i.e., one matrix per action. Importantly, the random variable $\pi$ represents all possible policies up to a given time horizon $T$ and each policy is defined as a sequence of actions, i.e., $\{U_t, ..., U_{T - 1}\}$ where $U_\tau \in \{1, ..., |U|\} \,\, \forall \tau \in \{t, ..., T - 1\}$. The prior over the policies is then set such that policies with high probability minimise the EFE, which is defined as follows \citep{Parr2019}:
\begin{align}\label{eq:1}
\bm{G}(\pi) \approx \sum_{\tau=t + 1}^T \Bigg[ \underbrace{D_{\mathrm{KL}}[\overbrace{Q(O_\tau|\pi)}^{\text{expected outcomes}}||\overbrace{P(O_\tau)}^{\text{prior preferences}}]}_{\text{risk}}\,\, +\,\, \underbrace{\mathbb{E}_{Q(S_\tau|\pi)}[\text{H}[P(O_\tau | S_\tau)]]}_{\text{ambiguity}}\Bigg]
\end{align}
where $Q(O_{\tau}|\pi) \delequal \sum_{S_\tau} P(O_\tau|S_\tau)Q(S_\tau|\pi)$, $\text{H}[\cdot]$ is the Shannon entropy, $\bm{G}$ is a vector containing as many elements as the number of policies, and the i-th element of $\bm{G}$ represents the cost of the i-th policy. The prior preferences over observations $P(O_\tau)$ represent the (categorical) distribution that the agent wants its observations to be sampled from and is traditionally encoded by the vector $\bm{C}$. Note that this generalises the concept of reward from reinforcement learning. Indeed, maximising reward can be reformulated as sampling observations from a Dirac delta distribution over reward maximising states \citep{dacosta2020relationship}.

\begin{figure}[H]
	\begin{center}
	\includegraphics[scale=0.9]{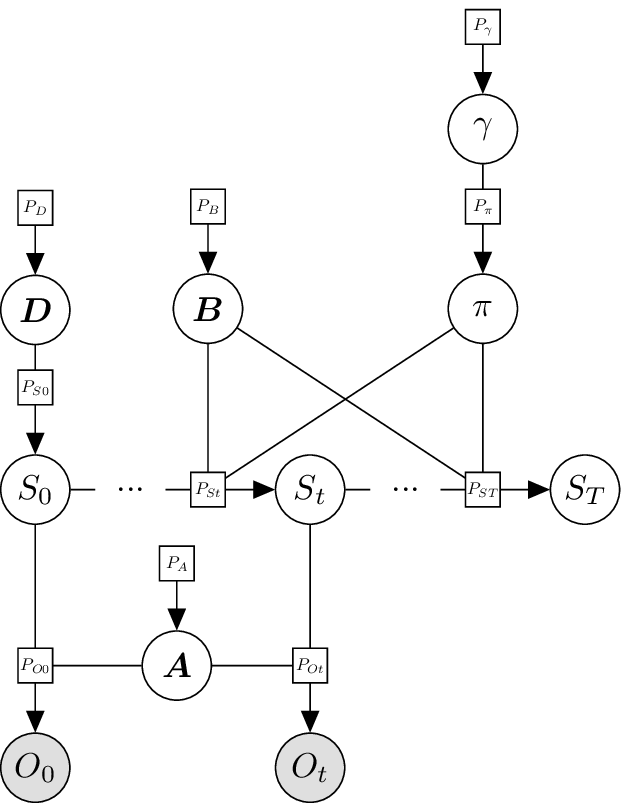}
	\end{center}
    \caption{This figure illustrates the Forney factor graph of the entire generative model presented by \citet{FRISTON2016862}. The probability of the initial states is defined by the vector $\bm{D}$, and the matrix $\bm{A}$ defines the probability of the observations given the hidden states. The $\bm{B}$ matrices define the transition between any successive pair of hidden states. This transition depends on the action performed by the agent, i.e., on the policy $\pi$. Furthermore, the prior over the policies has been chosen such that policies minimizing expected free energy are more probable. Finally, the precision parameter $\gamma$ (which modulates the confidence over which policies to pursue) is distributed according to a gamma distribution.}
    \label{fig:full_GM}
\end{figure}

Lastly, the precision parameter $\gamma$ has been associated to neuromodulators such as dopamine \citep{dopamine,AI_DISCRET} and can be understood as modulating the confidence over the information afforded by the expected free energy---e.g., smaller values of $\gamma$ lead to more stochastic decision-making. Finally, the framework allows $\bm{A}$, $\bm{B}$ and $\bm{D}$ to be learned by introducing Dirichlet distributions over the columns of these tensors such that the posterior parameters of $\bm{A}$, $\bm{B}$ and $\bm{D}$ can be reused in a new trial, as parameters of the prior, giving an empirical prior. Finally, the classic generative model is defined as follows:
\begin{align}
P(O_{0:t}, S_{0:T}, \pi, \bm{A}, \bm{B}, \bm{D}, \gamma)\,\, =\,\,\,\, &P(\pi|\gamma) P(\gamma) P(\bm{A}) P(\bm{B}) P(S_0|\bm{D}) P(\bm{D})\nonumber\\
&\prod^{t}_{\tau = 0} P(O_\tau|S_\tau,\bm{A}) \prod^{T}_{\tau = 1} P(S_\tau|S_{\tau-1},\pi_{\tau-1},\bm{B})\label{eq:2}
\end{align}
\vspace{-2cm}\\
\begin{align*}
&P(\pi|\gamma) = \sigma(-\gamma \bm{G}) & & P(\gamma) = \Gamma (1, \bm{\beta})\\
&P(\bm{A}) = \text{Dir}(\bm{a}) & &  P(\bm{B}) = \text{Dir}(\bm{b})\\
&P(S_0|\bm{D}) = \text{Cat}(\bm{D}) & & P(\bm{D}) = \text{Dir}(\bm{d})\\
&P(O_\tau|S_\tau,\bm{A}) = \text{Cat}(\bm{A}) & & P(S_{\tau}|S_{\tau - 1},\pi_{\tau-1},\bm{B}) = \text{Cat}(\bm{B}),
\end{align*}
where $\bm{G}$ is a vector of size $|\pi|$ whose $i$-th element corresponds to the expected free energy of the i-th policy, $\sigma(\bigcdot)$ is the softmax function, $\Gamma(\bigcdot)$, $\text{Cat}(\bigcdot)$ and $\text{Dir}(\bigcdot)$ stand for a gamma, categorical and Dirichlet distribution, respectively, $\pi_{\tau-1} \in \{1, ..., |U|\}$ is the action prescribed by policy $\pi$ at time $\tau-1$, $O_{0:t}$ is the set of (random variables representing) observations between time step $0$ and $t$, and $S_{0:T}$ is the set of (random variables representing) hidden states between time step $0$ and $T$.

\subsection{Variational Distribution}\label{section:VD}

The most widely used variational distribution \citep{AI_TUTO,FRISTON2016862} is not fully factorized, i.e., the posterior models the influence of the policy on the hidden states, leading to the following factorization: 
\vspace{-0.2cm}
\begin{align}\label{vd}
Q(S_{0:T}, \pi, \bm{A}, \bm{B}, \bm{D}, \gamma) = Q(\pi)Q(\bm{A})Q(\bm{B})Q(\bm{D})Q(\gamma) \prod_{\tau=0}^{T} Q(S_\tau|\pi)
\end{align}
\vspace{-2cm}\\
\begin{align*}
&Q(S_\tau|\pi) = \text{Cat}(\bm{\hat{D}}_\tau) && Q(\pi) = \text{Cat}(\bm{\hat{\pi}})\\
&Q(\gamma) = \Gamma(1,\bm{\hat{\beta}}) && Q(\bm{D}) = \text{Dir}(\bm{\hat{d}})\\
&Q(\bm{A}) = \text{Dir}(\bm{\hat{a}}) && Q(\bm{B}) = \text{Dir}(\bm{\hat{b}})
\end{align*}
where all variables with a hat correspond to posterior parameters. Notice that the distributions over $\bm{A}$, $\bm{B}$ and $\bm{D}$ remain Dirichlet distributions, and the distributions over $\gamma$ and $S_\tau$ remain a gamma and a categorical distribution, respectively. Only the distribution over $\pi$ changes from a Boltzmann to a categorical distribution but both are discrete distributions.

\begin{remark}
By definition the generative model $P(O_{0:t}, S_{0:T}, \pi, \bm{A}, \bm{B}, \bm{D}, \gamma)$ is a joint probability distribution over both the observed ($O_{0:t}$) and latent ($S_{0:T}, \pi, \bm{A}, \bm{B}, \bm{D}, \gamma$) variables. However, the goal of the variational distribution
$Q(S_{0:T}, \pi, \bm{A}, \bm{B}, \bm{D}, \gamma)$ is to approximate the true posterior $P(S_{0:T}, \pi, \bm{A}, \bm{B}, \bm{D}, \gamma | O_{0:t})$, which is a distribution over the latent variables only. Thus, the approximate posterior $Q(S_{0:T}, \pi, \bm{A}, \bm{B}, \bm{D}, \gamma)$ is also a distribution over the latent variables only, and does not contain the observed variables.
\end{remark}

\subsection{Variational Free Energy}

By definition, the variational free energy (VFE) is the Kullback-Leibler divergence between the variational distribution and the generative model, i.e.
\begin{align}
\bm{F} &= \mathbb{E}_{Q}[\ln Q(S_{0:T}, \pi, \bm{A}, \bm{B}, \bm{D}, \gamma) - \ln P(O_{0:t},S_{0:T}, \pi, \bm{A}, \bm{B}, \bm{D}, \gamma)]\\
&= \kl{Q(x)}{P(x|o)} \,\,\, - \ln P(o)\label{eq:3}\\
&= \underbrace{\kl{Q(x)}{P(x)}}_{\text{complexity}} - \underbrace{\mathbb{E}_{Q(x)}[\ln P(o|x)]}_{\text{accuracy}} \label{eq:4}
\end{align}
where $x = \{S_{0:T}, \pi, \bm{A}, \bm{B}, \bm{D}, \gamma\}$ refers to the model's hidden variables, and $o = \{O_{0:t}\}$ refers to the sequence of observations made by the agent. \eqref{eq:3} shows that minimising free energy involves moving the variational distribution $Q(x)$ closer to the true posterior $P(x|o)$ in the sense of KL divergence, and that the variational free energy is an upper bound on the negative log evidence. \eqref{eq:4} shows the trade-off between complexity and accuracy, where the complexity penalises the divergence of the posterior $Q(x)$ from the prior $P(x)$ and the accuracy scores how likely the observations are given the generative model and current belief of the hidden states.

To fit the variational distribution as closely as possible to the true posterior, the VFE is minimized w.r.t each variational factor, e.g., $Q(\bm{D})$ and $Q(\bm{A})$. The minimization process can be solved by iterating the update equations of each factor until convergence of the VFE. More details and intuition about those updates are given by \citet{AI_VMP}.

\subsection{Action selection}

In active inference, the simplest strategy to select actions is to compute the evidence for all policies under consideration and then choose the most likely action according to these policies. Mathematically, this amounts to a Bayesian model average by executing the action with the highest posterior evidence:
\begin{align}
u_t^* = \argmax_{u} \sum_{m = 1}^{|\pi|} [u = \pi^m_t] Q(\pi = m)
\end{align}
where $|\pi|$ is the number of policies, $\pi^m_t$ is the action predicted at the current time step by the $m-th$ policy, and $[u = \pi^m_t]$ is an indicator function that equals one if $u = \pi^m_t$ and zero otherwise.

\section{Monte Carlo Tree Search} \label{sec:MCTS}

By now, the reader should be familiar with the framework of active inference and how variational message passing combined with the Forney factor graph formalism can be used to compute posterior beliefs. We now turn to the last piece of background required to present the method proposed in this paper: Monte Carlo tree search (MCTS), which is based on the multi-armed bandit literature (c.f. Appendix G for details).

\subsection{A four step process}

Monte Carlo tree search has been widely used in the reinforcement learning literature as it enables agents to plan efficiently when the evaluation of every possible action sequence is computationally prohibitive \citep{Go,6145622,MuZero,DeepAIwithMCMC}. This algorithm essentially builds a tree in which each node corresponds to a future state and each edge represents the action that led to that state. Initially, the tree is only composed of a root node corresponding to the current state. From here, MCTS is a four step process. First, a node is selected according to a criterion such as the upper confidence bound for trees (UCT):
\begin{align}
UCT_j = \bar{X}_j + 2 C_p \sqrt{\frac{2\ln n}{n_j}},
\end{align}
where $n$ is the number of times the current (parent) node has been explored, $n_j$ stands for the number of times the j-th child node has been explored, $C_p > 0$ is the exploration constant and $\bar{X}_j$ is the average reward received by the j-th child. Note, if the rewards are in $[0,1]$, then $C_p = \frac{1}{\sqrt{2}}$ is known to satisfy the Hoeffding inequality \citep{6145622} and the UCT criterion reduces to:
\begin{align}
UCT_j = \bar{X}_j + 2 \sqrt{\frac{\ln n}{n_j}}.
\end{align}
Importantly, the UCT aims to explore highly rewarding paths (exploitation in first term), while also visiting rarely explored regions (exploration in second term). 

As shown in Figure \ref{fig:MCTS}, this criterion is first used at the root level leading to the selection of a node from the root's children. Then, it is used at the level of the root's children, and so on until a leaf node is reached. As explained by \citet{10.1007/11871842_29}, $UCT$ is a direct application of the $UCB1$ criterion to trees, where at each level, the allocation strategy must pick a node that is expected to lead to the highest reward, and ``picking the $i$-th node" can be seen as the $i$-th action of a multi-armed bandit problem. Once a leaf node has been selected, an expansion step is performed by sampling an action from a distribution and adding the node corresponding to this action as a child of the leaf node, i.e., the leaf node is expanded.

The third step consists of performing virtual rollouts into the future to estimate the average future reward obtained from the state corresponding to the newly expanded node. Finally, during the back-propagation step, the average reward obtained from the newly expanded state is used to re-evaluate the average quality of all its ancestors, and the visit counts of all nodes (in the branch explored) are increased. Iterating this four-step process until the time budget has been spent gives a fairly good estimate of the best action to perform next. Figure \ref{fig:MCTS} summarises the MCTS procedure. In the next section, we present our approach and show how MCTS can be fused to active inference by performing a dynamical expansion of the generative model.

\begin{figure}[H]
	\begin{center}
	\includegraphics[scale=1]{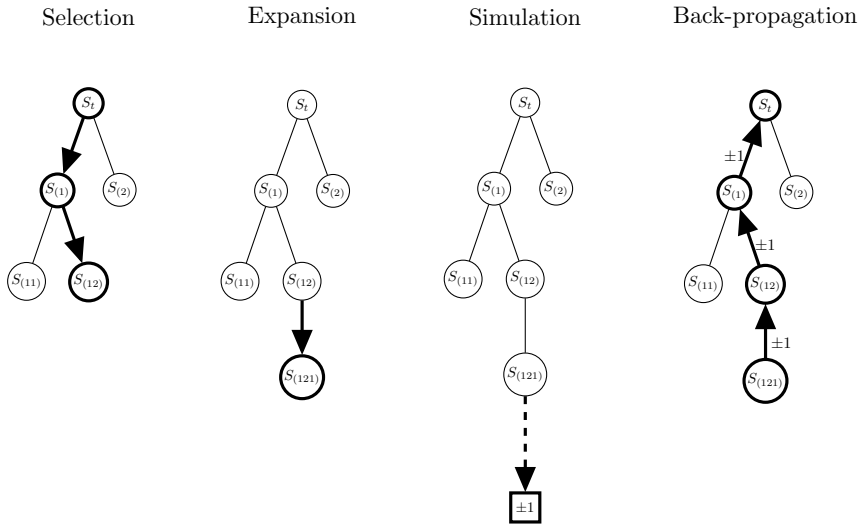}
	\end{center}
\vspace{-0.25cm}
    \caption{
This figure illustrates the MCTS algorithm as a four step process. First, we start at the node representing the current state $S_t$ and select a node based on the UCT criterion until a leaf node is reached. Second, the tree is expanded to a new node by taking a virtual action from the selected node. Third, the value of this action is estimated by simulating the expected reward following that action. In the simplest version of MCTS, simulations are run until a terminal state is reached, e.g., until the game ends in Go or Chess. Fourth, the expected value is back-propagated to the new node and all of its ancestor nodes. The multi-indices in curly brackets denote action sequences taken from the root node, indicating the current state of the environment.}
    \label{fig:MCTS}
\end{figure}

\section{Branching Time Active Inference (BTAI)} \label{sec:ai_ts}

In this section, we present a novel active inference agent that frames planning using a tree as a form of Bayesian model extension. Using terminology from concurrency theory \citep{concurrency_howard}, we call our new formalism, \textit{Branching Time Active Inference} (BTAI). In this domain, models of systems based upon sequences of actions (the format of policies) are described as \textit{linear time}, while models based upon tree and even graph structures are called \textit{branching time} \citep{concurrency_glabbeek,concurrency_glabbeek_2,concurrency_howard}. Importantly, we do not consider the generative model and the tree as two different objects. Instead, we merge those two objects together into a generative model that can be dynamically expanded.

Figure \ref{fig:AITS} illustrates an example of such a model, where for the sake of simplicity, we assume that the matrices $\bm{A}$, $\bm{B}$ and $\bm{D}$ are given to the agent. Furthermore, the random variable representing the policies has been replaced by random variables representing actions and the precision parameter $\gamma$ has been removed, which is a common design choice \citep{DeepAIwithMCMC}. Additionally, we follow \citet{Parr2019} by viewing future observations as latent random variables. Finally, note that the transition between two consecutive hidden states in the future ($S_\IdMLast{I}$ and $S_I$ where $I$ is a multi-index) will only depend on the matrix $\bm{\bar{B}}_I = \bm{\bar{B}}(\bigcdot, \bigcdot, I_\text{last})$, i.e., the matrix corresponding to action $I_\text{last}$ that led to the transition from $S_\IdMLast{I}$ to $S_I$. The reader is referred to Table \ref{tab:message_passing_notation} for the definition of $\bm{\bar{B}}$ and more details about multi-indices can be found in Appendix F.

\subsection{Prior, Posterior and Target distributions}

Since the generative model is fairly different from the standard model, we state here its formal definition:
\begin{align}
P(O_{0:t},&S_{0:t},U_{0:t-1},O_{\mathbb{I}_t},S_{\mathbb{I}_t},\bm{A},\bm{B},\bm{D},\bm{\Theta}_{0:t-1}) = P(S_0|\bm{D}) P(\bm{A}) P(\bm{B}) P(\bm{D}) \prod_{\tau = 0}^t P(O_\tau|S_\tau,\bm{A})\nonumber\\
& \prod_{\tau = 0}^{t - 1} P(U_\tau|\bm{\Theta}_\tau)P(\bm{\Theta}_\tau) \prod_{\tau = 1}^t P(S_\tau|S_{\tau - 1}, U_{\tau - 1},\bm{B})\prod_{I \in \mathbb{I}_t} P(O_I|S_I)P(S_I|S_\IdMLast{I})
\end{align}
where $\mathbb{I}_t$ is the set of all non-empty multi-indices already expanded by the tree search from the current state $S_t$, the second product ($\tau$ from $0$ to $t-1$) models the uncertainty over action, reflecting the focus on actions rather than policies, and $S_\IdMLast{I}$ is the parent state of $S_I$. Intuitively, the product over all $I \in \mathbb{I}_t$ models the future, while the rest of the above equation models the past and present. Additionally, we need to define the individual factors:
\begin{align*}
&P(S_0|\bm{D}) = \text{Cat}(\bm{D})& &P(U_\tau|\bm{\Theta}_\tau) = \text{Cat}(\bm{\Theta}_\tau) \\
&P(O_\tau|S_\tau,\bm{A}) = \text{Cat}(\bm{A})& &P(O_I|S_I) = \text{Cat}(\bm{\bar{A}}) \\
&P(S_\tau|S_{\tau - 1}, U_{\tau - 1},\bm{B}) = \text{Cat}(\bm{B})& &P(S_I|S_\IdMLast{I}) = \text{Cat}(\bm{\bar{B}}_I)\\
&P(\bm{D}) = \text{Dir}(\bm{d}) & & P(\bm{\Theta}_\tau) = \text{Dir}(\bm{\theta}_\tau)\\
&P(\bm{A}) = \text{Dir}(\bm{a}) & & P(\bm{B}) = \text{Dir}(\bm{b})
\end{align*}
where $\bm{\bar{A}}$ and $\bm{\bar{B}}$ are defined in Table \ref{tab:message_passing_notation}, $\bm{\bar{B}}_I = \bm{\bar{B}}(\bigcdot,\bigcdot,I_\text{last})$ is the matrix corresponding to $I_\text{last}$ and $I_\text{last}$ is the last index of the multi-index $I$, i.e., the last action that led to $S_I$. Importantly, $\bm{\bar{A}}$ and $\bm{\bar{B}}$ should not be confused with $\bm{\mathring{A}}$ and $\bm{\mathring{B}}$, $\bm{\bar{A}}$ is the expectation of $\bm{A}$ w.r.t. $Q(\bm{A})$, while $\bm{\mathring{A}}$ is the expectation of the logarithm of $\bm{A}$ w.r.t. $Q(\bm{A})$.

We now turn to the definition of the variational posterior. Under the mean-field approximation:
\vspace{-0.5cm}
\begin{equation}
\label{eq: mean field approx posterior}
\begin{split}
   Q(S_{0:t},U_{0:t-1},&O_{\mathbb{I}_t},S_{\mathbb{I}_t},\bm{A},\bm{B},\bm{D},\bm{\Theta}_{0:t-1}) = \\
&Q(\bm{A}) Q(\bm{B}) Q(\bm{D}) \prod_{\tau = 0}^{t - 1} Q(U_\tau) Q(\bm{\Theta}_\tau) \prod_{\tau = 0}^t Q(S_\tau) \prod_{I \in \mathbb{I}_t} Q(O_I)Q(S_I) 
\end{split}
\end{equation}
where the individual factors are defined as:
\begin{align*}
&Q(S_\tau) = \text{Cat}(\bm{\hat{D}}_\tau)& &Q(U_\tau) = \text{Cat}(\bm{\hat{\Theta}}_\tau) \\
&Q(O_I) = \text{Cat}(\bm{\hat{E}}_I)& &Q(S_I) = \text{Cat}(\bm{\hat{D}}_I)\\
&Q(\bm{D}) = \text{Dir}(\bm{\hat{d}}) & & Q(\bm{\Theta}_\tau) = \text{Dir}(\bm{\hat{\theta}}_\tau)\\
&Q(\bm{A}) = \text{Dir}(\bm{\hat{a}}) & & Q(\bm{B}) = \text{Dir}(\bm{\hat{b}}),
\end{align*}
where $\bm{\hat{D}}_\tau$, $\bm{\hat{\Theta}}_\tau$, $\bm{\hat{E}}_I$, $\bm{\hat{D}}_I$, $\bm{\hat{d}}$, $\bm{\hat{\theta}}_\tau$, $\bm{\hat{a}}$ and $\bm{\hat{b}}$ are the parameters of the factors $Q(S_\tau)$, $Q(U_\tau)$, $Q(O_I)$, $Q(S_I)$, $Q(\bm{D})$, $Q(\bm{\Theta}_\tau)$, $Q(\bm{A})$ and $Q(\bm{B})$, respectively. Importantly, $O_I$ appears in the variational distribution because observations in the future are treated as hidden variables.

Finally, we follow \citet{millidge2020expected} in assuming that the agent aims to minimise the KL divergence between the approximate posterior depicting the state of the environment and a target (desired) distribution. Therefore, our framework allows for the specification of prior preferences over both future hidden states and future observations:
\begin{align}
\label{eq: target distribution}
V(O_{\mathbb{I}_t},S_{\mathbb{I}_t}) = \prod_{I \in \mathbb{I}_t} V(O_I)V(S_I)
\end{align}
where the individual factors are defined as:
\begin{align}
V(O_I) = \text{Cat}(\bm{C}_O),& &V(S_I) = \text{Cat}(\bm{C}_S).
\end{align}
Importantly, by specifying the value of future observations and states, $\bm{C}_O$ and $\bm{C}_S$ play a similar role to the vector $\bm{C}$ in active inference, i.e., they specify which observations and hidden states are rewarding.

To sum up, this framework is defined using three distributions: the prior defines the agent's beliefs before sampling any observation; the posterior is an updated version of the prior which takes into account past observations made by the agent; finally, the target distribution encodes the agent's prior preferences in terms of future observations and hidden states.

\begin{figure}[H]
	\begin{center}
	\includegraphics[scale=1]{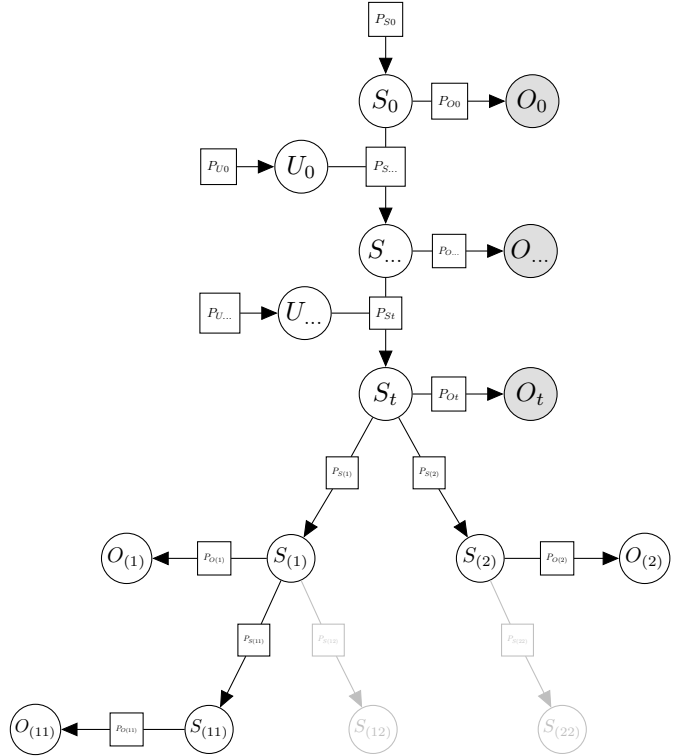}
 	\end{center}
\vspace{-0.25cm}
    \caption{
This figure illustrates the new expandable generative model allowing planning under active inference. The future is now a tree like generative model whose branches correspond to the policies considered by the agent. As we will see, these branches can be dynamically expanded during planning. Here, the nodes in light gray represent possible expansions of the current generative model. For the sake of clarity, the random tensor $\bm{A}$, $\bm{B}$, $\bm{\Theta}_\tau$ and $\bm{D}$ are not illustrated, i.e., Dirichlet priors over those random tensors are not shown.}
    \label{fig:AITS}
\end{figure}

\subsection{Bayesian belief updates} \label{sec:updates}

In this section, we focus on the set of update equations used to perform approximate Bayesian inference. These update equations rely on variational message passing as presented in Section \ref{sec:vmp}, see \citet{AI_VMP} as well as \citet{VMP_TUTO} for details. A key strength of the message passing approach is the capacity to derive and implement these updates within an automatic and modular toolbox \citep{Simul_AI,DBLP:journals/ijar/CoxLV19}, which in a way similar to automatic differentiation alleviates the final user from the burden of manually deriving complex update equations for each new generative model. To simplify our notation, we use two operators $\otimes$ and $\odot$ that we call generalized outer and inner product, respectively. The generalized outer product creates an $N$ dimensional tensor from $N$ vectors, while the generalized inner product performs a weighted average over one dimension of an $N$ dimensional array, cf Appendix A for details. Using these notations, the first set of update equations are given by:
\begin{align}
Q^*(\bm{D}) &= \text{Dir}\big(\bm{\hat{d}}\big) \,\quad \text{ where } \quad \bm{\hat{d}} = \bm{d} \,+ \bm{\hat{D}}_0 {\color{white}\sum_{\tau}^{t}}\\
Q^*(\bm{A}) &= \text{Dir}\big(\bm{\hat{a}}\big) \,\quad \text{ where }\quad \bm{\hat{a}} = \bm{a} + \sum_{\tau = 0}^t \otimes \Big[ \bm{\hat{D}}_\tau, \bm{o}_\tau\Big]\\
Q^*(\bm{B}) &= \text{Dir}\big(\bm{\hat{b}}\big) \,\quad \text{ where } \quad \bm{\hat{b}} = \bm{b} + \sum_{\tau = 1}^t \otimes \Big[ \bm{\hat{D}}_{\tau - 1}, \bm{\hat{\Theta}}_{\tau - 1}, \bm{\hat{D}}_{\tau}\Big]\\
Q^*(\bm{\Theta}_\tau) &= \text{Dir}\big(\bm{\hat{\theta}}_\tau\big) \,\,\, \text{ where } \quad \bm{\hat{\theta}}_\tau = \bm{\theta}_\tau \, + \bm{\hat{\Theta}}_\tau
\end{align}
where $\bm{o}_\tau$ is the observation made at time $\tau$. Furthermore, this first set of equations count (probabilistically) the number of times, an initial hidden state has been observed, an action has been performed, a state has generated a particular observation or an action has led to the transition between two consecutive hidden states. For example, the posterior parameters $\bm{\hat{a}}$ are computed by adding $\sum_{\tau = 0}^t \otimes [ \bm{\hat{D}_\tau}, \bm{o}_\tau]$ (i.e., the number of times a state-observation pair has been observed during this trial) to the prior parameters $\bm{a}$ (i.e., the number of times this same pair has been observed during previous trials). The equations for belief updates are given by:
\begin{align}
Q^*(O_I) &= \sigma \big( \bm{\mathring{A}} \odot \bm{\hat{D}}_I \big){\color{white}\sum_{\tau}^{t}}\\
Q^*(S_I) &= \sigma \Big( \bm{\mathring{A}} \odot \bm{\hat{E}}_I + \bm{\mathring{B}}_I \odot \bm{\hat{D}}_\IdMLast{I} + \sum_{J \in \text{ch}_I} \bm{\mathring{B}}_J \odot \bm{\hat{D}}_J\Big) {\color{white}\sum_{\tau}^{t}}\\
Q^*(U_\tau) &= \sigma \big( \bm{\mathring{\Theta}} + \bm{\mathring{B}} \odot [ \bm{\hat{D}}_{\tau}, \bm{\hat{D}}_{\tau + 1}] \big) {\color{white}\sum_{\tau}^{t}}
\end{align}
\vspace{-1.5cm}
\begin{align}
\quad \quad Q^*(S_\tau) = \sigma\Big(&[\tau = 0] \bm{\mathring{D}}_\tau \,\,+\,\, [\tau \neq 0] \bm{\mathring{B}} \odot [\bm{\hat{D}}_{\tau - 1}, \bm{\hat{\Theta}}_{\tau - 1}]{\color{white}\sum^{t}}\nonumber\\
+ &\bm{\mathring{A}} \odot \bm{o}_\tau {\color{white}\sum_{\tau}^{t}}\nonumber\\
+ &[\tau = t] \sum_{J \in \text{ch}_t} \bm{\mathring{B}}_J \odot \bm{\hat{D}}_{J} \,\,+\,\, [\tau \neq t] \bm{\mathring{B}} \odot [ \bm{\hat{D}}_{\tau + 1}, \bm{\hat{\Theta}}_{\tau}]\Big)
\end{align}
where $\sigma(\bigcdot)$ is the softmax function, $\text{ch}_t$ are the children (states) of the current states $S_t$, $\text{ch}_I$ are the children (states) of the states $S_I$, $[\text{predicate}]$ is an indicator function returning one if the predicate is true and zero otherwise, and the definition of $\bm{\mathring{A}}$, $\bm{\mathring{B}}$, $\bm{\mathring{D}}$ and $\bm{\mathring{\Theta}}_\tau$ are given in Table \ref{tab:message_passing_notation}. Note that thanks to the operators $\otimes$ and $\odot$, the perception (i.e., state-estimation) equations can be intuitively understood as a sum of messages, where each message from a factor to a variable is the average over all dimensions except the dimension of the variable, e.g., the message ($\bm{\mathring{A}} \odot \bm{o}_0$) from $P_{o_0}$ to $S_0$ is the vector obtained by weigthing the rows of $\bm{\mathring{A}}$ by the elements of $\bm{o}_0$. Importantly, the above update equations are almost identical to the ones used in standard active inference, and thus can be implemented efficiently. Indeed, most of the computation required is about addition of matrices $O(n^2)$ and multiplication of matrices $O(n^3)$, or their higher dimensional counterparts.

\begin{table}[h]
\begin{center}
\begin{tabular}{cl}
\hline
\\[-0.15cm]
Notation & Meaning \\
\\[-0.15cm]
\hline
\hline
\\[-0.4cm]
$\langle f(X) \rangle_{P_X} \delequal \mathbb{E}_{P_X}[f(X)]$ & The expectation of $f(X)$ over $P_X$\\
\\[-0.4cm]
\hline
\\[-0.4cm]
$\psi(\bigcdot)$ & The digamma function\\
\\[-0.4cm]
\hline
\\[-0.4cm]
$\bm{\mathring{\Theta}}_{\tau}(i) = \langle \ln \bm{\Theta}_{\tau}(i) \rangle_{{Q_\Theta}_\tau} = \psi\big(\bm{\hat{\theta}}_\tau(i)\big) - \psi\big(\sum_k \bm{\hat{\theta}}_\tau(k)\big)$ & The expected logarithm of $\bm{\Theta}_\tau$\\
\\[-0.4cm]
\hline
\\[-0.4cm]
$\bm{\mathring{D}}(i) = \langle \ln \bm{D}(i) \rangle_{Q_D} = \psi\big(\bm{\hat{d}}(i)\big) - \psi\big(\sum_k \bm{\hat{d}}(k)\big)$ & The expected logarithm of $\bm{D}$\\
\\[-0.4cm]
\hline
\\[-0.4cm]
$\bm{\mathring{A}}(i,j) = \langle \ln \bm{A}(i,j) \rangle_{Q_A} = \psi\big(\bm{\hat{a}}(i,j)\big) - \psi\big(\sum_k \bm{\hat{a}}(k,j)\big)$ & The expected logarithm of $\bm{A}$\\
\\[-0.4cm]
\hline
\\[-0.4cm]
$\bm{\mathring{B}}(i,j,u) = \langle \ln \bm{B}(i,j,u) \rangle_{Q_B} = \psi\big(\bm{\hat{b}}(i,j,u)\big) - \psi\big(\sum_k \bm{\hat{b}}(k,j,u)\big)$ & The expected logarithm of $\bm{B}$\\
\\[-0.4cm]
\hline
\\[-0.4cm]
$\bm{\bar{A}}(i,j) = \langle \bm{A}(i,j) \rangle_{Q_A} = \frac{\bm{\hat{a}}(i,j)}{\sum_k \bm{\hat{a}}(k,j)}$ & The expectation of $\bm{A}$\\
\\[-0.4cm]
\hline
\\[-0.4cm]
$\bm{\bar{B}}(i,j,u) = \langle \bm{B}(i,j,u) \rangle_{Q_B} = \frac{\bm{\hat{b}}(i,j,u)}{\sum_k \bm{\hat{b}}(k,j,u)}$ & The expectation of $\bm{B}$\\
\\[-0.4cm]
\hline
\end{tabular}
\caption{Update equations notation. Note that Appendix D provides a proof for $\bm{D}$.}\label{tab:message_passing_notation}
\end{center}
\vspace{-0.75cm}
\end{table}

\subsection{Planning as structure learning}

In this section, we frame planning as a form of structure learning where the structure of the generative model is modified dynamically. This method is greatly inspired by the Monte Carlo tree search literature, c.f., Section \ref{sec:MCTS} for details.

\subsubsection{Selection of the node to be expanded} \label{sec:selection}

The first step of planning is to select a node to be expanded. The selection process starts at the root node, if the root node still has unexplored children, then one of them is selected. Otherwise, the child node maximizing the $UCT$ criterion, where the average reward is replaced by minus the average EFE, is selected, i.e., the selected node maximises:
\begin{align}
UCT_J = \underbrace{- \bar{g}_J}_{\text{exploitation}} +\quad \underbrace{C_p \sqrt{\frac{\ln n}{n_J}}}_{\text{exploration}},
\end{align}
where $J$ is a multi-index, $n$ is the number of times the root node has been visited, $n_J$ is the number of times the child corresponding to the multi-index $J$ was selected, and $\bar{g}_J$ is the average cost received when selecting the child $S_J$. The $UCT$ criterion can be understood as a trade-off between exploitation and exploration at the tree level, which is different to the exploitation and exploration dilemma at the model level. This dilemma is handled by the EFE. Also, the notion of cost in the above equation can be defined in many ways and will be the subject of Section \ref{sec:quality}. For our purposes, the cost will be equal, or similar, to the expected free energy, which means that the expected free energy drives structure learning. When a root's child is selected, it becomes the new root in the above procedure, which is iterated until a leaf node is reached. 

\subsubsection{Dynamical expansion of the generative model} \label{sec:expansion}

Let $S_\IdMLast{I}$ denotes the leaf node selected for expansion. When $S_\IdMLast{I}$ has been selected, the structure of the generative model needs to be modified by expanding all possible actions from that node. For each action, we expand the generative model by adding a future hidden state whose prior distribution is given by

\begin{align}
P(S_I|S_\IdMLast{I}) = \text{Cat}(\bm{\bar{B}}_I),
\end{align}
where $\bm{\bar{B}}_I$ is the matrix corresponding to the last action that led to $S_I$. Finally, we expand the (future) observation associated with the new hidden state $S_I$, whose distribution is:

\begin{align}
P(O_I|S_I) = \text{Cat}(\bm{\bar{A}}).
\end{align}

To sum up, the expansion step is adding two random variables ($S_I$ and $O_I$) to the generative model, i.e. the generative model becomes bigger, and $I$ is added to the set of all non-empty multi-indices already expanded by the tree search ($\mathbb{I}_t$). The prior distributions over those newly added random variables (i.e. $S_I$ and $O_I$) are defined using the matrices $\bm{\bar{B}}_I$ and $\bm{\bar{A}}$, which effectively predict the future states and observations. After the expansion step, the posterior distribution over $S_I$ and $O_I$ needs to be computed. At least two kinds of inference strategies can be used. The first---global inference---performs variational message passing over the entire generative model, while the second---local inference---only iterates the update equations of the newly expanded nodes, i.e., $S_I$ and $O_I$, until convergence to the variational free energy minimum.

\subsubsection{Cost evaluation of the expanded nodes} \label{sec:quality}

After expanding the model structure, we need to compute the cost of the newly expanded node $S_I$. As explained in Section \ref{sec:selection}, the cost of $S_I$ will influence the probability of expanding $S_I$ during future planning iterations. In active inference, the classic objective of planning is the expected free energy as defined in Section \ref{ssec:GM}, i.e.,
\begin{align}
g^{classic}_I \delequal D_{\mathrm{KL}}[Q(O_I)||V(O_I)]\,\, +\,\, \mathbb{E}_{Q(S_I)}[\text{H}[P(O_I | S_I)]]
\end{align}
where $g^{classic}_I$ trades off risk (first summand) and ambiguity (second summand). Alternatively, one could follow Section 5 of \citet{millidge2020expected} and define the cost of $S_I$ using the free energy of the expected future:
\begin{equation}
    g^{feef}_I = \kl{Q(O_I, S_I)}{V(O_I, S_I)}
\end{equation}
where $V(O_I, S_I)$ is the target distribution over states and observations. The target distribution $V(O_I, S_I)$ generalises the $\bm{C}$ matrix in Friston's model by specifying prior preferences over both future observations and future states. Also, this formulation of the cost speaks to the notion of KL divergence minimization proposed by \citet{hafner2020action}.

Furthermore, due to the mean-field approximation of the posterior \eqref{eq: mean field approx posterior} and the factorised form of the target distribution \eqref{eq: target distribution}, the expression of the cost simplifies to

\begin{align}
g^{pcost}_I \delequal \kl{Q(S_I)}{V(S_I)} + \kl{Q(O_I)}{V(O_I)}.
\end{align}

Intuitively, the pure cost ($g^{pcost}_I$) measures how different the predicted future hidden states and observations are from desired states and observations. In future research, it might be interesting to compare the performance and behaviour of $g^{pcost}_I$, $g^{feef}_I$ and $g^{classic}_I$ empirically and theoretically. 

Note that in MCTS, the evaluation of a node's quality is done by performing virtual roll-outs, while $g^{pcost}_I$, $g^{feef}_I$ and $g^{classic}_I$ are not. If we let $g_I$ be any of those criteria, then we can improve our estimate of the cost, by computing $g_I^{average}$, i.e., the average cost over $N$ roll-outs of size $K$. Algorithm \ref{algo:roll-out} presents the pseudo code used to estimate $g_I^{average}$.

\begin{algorithm}[H]
\label{algo:roll-out}
\SetKwInOut{Input}{Input}
\SetKwFor{RepTimes}{repeat}{times}{end}
\SetAlgoLined
\Input{$N$ the number of virtual roll-outs, $K$ the maximal length of each roll-out.}
 $g_I^{average} \leftarrow 0$ \tcp*{Initialize roll-out estimate to zero}
 \RepTimes{$N$} {
 $g_I^{rollout} \leftarrow g_I$ \tcp*{Initial cost equals cost of $S_I$}
 \For{$i\leftarrow 1$ \KwTo $K$} {
  sample a random action $U_i$ uniformly from the set of unexplored actions\;
  perform the expansion of the current node using $U_i$ (Section \ref{sec:expansion})\;
  perform inference on the newly expanded nodes (Section \ref{sec:updates})\;
  $g_I^{rollout} \leftarrow g_I^{rollout} + g_{J}$ \tcp*{$J$ corresponds to last expanded node}
 }
 $g_I^{average} \leftarrow g_I^{average} + g_I^{rollout}$\;
 }
 $g_I^{average} \leftarrow g_I^{average} / N$\;
 \caption{Estimation of $g_I^{average}$}
\end{algorithm}

\subsubsection{Propagation of the node cost} \label{sec:propagation}

In this section, we let $\bm{G}^{aggr}_L$ be a variable that contains the total cost of the node $S_L$, where $L$ could be any multi-index. According to the previous section, we let $g_L$ be any of the following evaluation criteria $g^{pcost}_L$, $g^{feef}_L$ and $g^{classic}_L$. Initially, $\bm{G}^{aggr}_L$ equals $g_L$. Also, we let $S_K$ be the node that was selected for expansion, and let $S_I$ be an arbitrary hidden state expanded from $S_K$. The cost of the newly expanded node(s) can be propagated either forward or backward. The forward propagation (towards the leaves) leads to the following equation:
\begin{align}
\bm{G}^{aggr}_I \leftarrow g_I + \bm{G}^{aggr}_K,
\end{align}
where here $\bm{G}^{aggr}_K$ is the aggregated cost of the parent of $S_I$. Importantly, the symbol $\leftarrow$ refers to a programming-like assignment (i.e., an incremental update) performed each time the tree is expanded. The backward propagation (towards the root) leads to:
\begin{equation}\label{eq:backprop}
\bm{G}^{aggr}_J \leftarrow \bm{G}^{aggr}_J + g_I \quad \forall J \in \mathbb{A}_I
\end{equation}
where $\mathbb{A}_I$ corresponds to all ancestors of the newly expanded node $S_I$. We will see in Section \ref{sec:aits_ai_sai} that these strategies respectively relate to active inference and sophisticated inference \citep{Sophisticated_INF}. Finally, since the agent is free to choose any action, we can back-propagate the (locally) minimum cost, i.e.,
\begin{equation}\label{eq:minbackprop}
\bm{G}^{aggr}_J \leftarrow \bm{G}^{aggr}_J + \min_{a \in \{1, ..., |U|\}} g_{K::a} \quad \forall J \in \mathbb{A}_I,
\end{equation}
where $K::a$ is a multi-index obtained from $K$ by adding the action $a$ to the sequence of actions described by $K$. In all cases, the propagation step updates the counter $n_J$ associated with each ancestor $S_J$ of the newly expanded hidden state $S_I$; this counts the number of times the node $S_J$ has been explored (exactly as in MCTS). This counter will be used for action selection, as well as for the computation of the average cost of $S_J$---$\bar{g}_J$---that was left undefined by Section \ref{sec:selection}. Formally, $\bar{g}_J$ is given by:
\begin{align}
\bar{g}_J = \frac{1}{n_J}\bm{G}^{aggr}_J.
\end{align}

\begin{remark}
The forward propagation of the cost presented above will only be used for theoretical purpose in Section \ref{sec:aits_ai_sai}. Practical implementation of BTAI should use the backward schemes.
\end{remark}

\subsection{Action selection} \label{sec:action_selection}

The planning procedure presented in the previous section ends after a pre-specified amount of time has elapsed or when a sufficiently good policy has been found. When the planning is over, the agent needs to choose an action to act in its environment. In a companion paper \citep{AITS_PRACTICE} that presents empirical results of BTAI, the actions are sampled from $\sigma(-\gamma \frac{g}{N})$, where $\sigma(\bigcdot)$ is a softmax function, $\gamma$ is a precision parameter, $g$ is a vector whose elements correspond to the cost of the root's children and $N$ is a vector whose elements correspond to the number of visits of the root's children. Importantly, actions with low average cost are more likely to be selected than actions with high average cost.

Alternative approaches to action selection \citep{6145622} could be studied. For example, one could imagine sampling actions from a categorical distribution with parameter $\sigma(N)$, where $N$ is a vector containing the $n_J$ of all children of the root node. Or, we could select the action corresponding to the root's child with the highest number of explorations $n_J$. The fact that it has been visited more often means that is has a lower cost overall. If there were a tie between several actions, the action with the lowest cost would be selected. The study of these strategies is left to future research.

\subsection{Action-perception cycle with tree search} \label{sec:act-per-cycle}

In active inference, the action-perception cycle realises an active inference agent in an infinite loop \citep{Simul_AI}. Each loop iteration begins with the agent sampling an observation from the environment. The observation is used to perform inference about the states and contingencies of the world, e.g., an impression on the retina might be used to reconstruct a three dimensional scene with a representation of the objects that it contains. Then, planning is performed by inferring the consequences of alternative action sequences. Importantly, only a subset of all possible action sequences are evaluated, due to the dynamical expansion of the generative model. Finally, the agent selects an action to perform in the environment by sampling a softmax function of minus the average cost weighted by the precision parameter $\gamma$, i.e., $\sigma(-\gamma \frac{g}{N})$. Therefore, actions with low average cost are more likely to be selected than actions with high average cost. We summarise our method using pseudo-code in Algorithm \ref{algo: AITS}.

\begin{algorithm}[H]
\label{algo: AITS}
\SetAlgoLined
 \While{end of trial not reached}{
  sample an observation from the environment\;
  perform inference using the observation (Section \ref{sec:updates})\;
  \While{maximum planning iteration not reached}{
   select a node to be expanded (Section \ref{sec:selection})\;
   perform the expansion of the node (Section \ref{sec:expansion})\;
   perform inference on the newly expanded nodes (Section \ref{sec:updates})\;
   evaluate the cost of the newly expanded nodes (Section \ref{sec:quality})\;
   propagate the cost of the nodes through the tree, either forward or backward (Section \ref{sec:propagation})\;
  }
  select an action to be performed (Section \ref{sec:action_selection})\;
  execute the action in the environment leading to a new observation\;
 }
 \caption{Action-perception cycle with tree search}
\end{algorithm}

\section{Connection between BTAI, active inference and sophisticated inference} \label{sec:aits_ai_sai}

In this section, we explore the relationship between BTAI, active inference (AcI) and sophisticated inference (SI). We show that BTAI is a class of algorithms that generalizes AcI and is related to SI. To do so, we focus on the ``cost" of a policy for each method. In addition, we need to introduce the notion of localized and aggregated cost. The localized cost of a node $S_I$, denoted $\bm{G}_I^{local}$, is the cost of $S_I$ in and of itself, i.e., without any consideration of the cost of past or future states. The aggregated cost of a node $S_I$, denoted $\bm{G}_I^{aggre}$, is the cost of $S_I$ when taking into account either the cost of future states that can be reached from $S_I$ (which is the case in SI) or the cost of the past states that an agent has to go through in order to reach $S_I$ (which is the case in AcI).

\subsection{Active inference}

The full framework of active inference was described in Section \ref{sec:ai}. This section focuses on expressing the expected free energy in a recursive form that highlights the relationship between BTAI and AcI. We start by defining the notion of localized and aggregated EFE with Definitions \ref{def:localized_efe} and \ref{def:aggregated_efe}, respectively. Then, we show that in active inference (under some assumptions described below), the aggregated EFE of a policy of size $N$ is given by the aggregated EFE of a policy of size $N - 1$ plus the localized EFE received at time $t + N$.

In active inference, a policy is a sequence of actions $\pi = (U_t, U_{t+1}, ..., U_{T - 1})$, where $T$ is the time horizon of planning, and for convenience, $\pi_{N}$ denotes a policy of size $N$, obtained by selecting the first $N$ actions of the policy $\pi$, i.e., $\pi_{N} = (U_t, U_{t+1}, ..., U_{t + N - 1})$ with $N \leq T - t$. Recall from Section \ref{sec:ai}, that (in active inference) the expected free energy of a policy is given by:
\begin{align}\label{eq:efe_act_inf}
\bm{G}(\pi) = \sum_{\tau=t + 1}^T \bm{G}(\pi, \tau) = \sum_{\tau=t + 1}^T \Bigg[ D_{\mathrm{KL}}[Q(O_\tau|\pi)||P(O_\tau)]\,\, +\,\, \mathbb{E}_{Q(S_\tau|\pi)}[\text{H}[P(O_\tau | S_\tau)]]\Bigg].
\end{align}
If instead of letting $\tau$ range from $t + 1$ to $T$, we let $N$ range from $1$ to $T - t$, then Equation \ref{eq:efe_act_inf} can be re-written as:
\begin{align}
\bm{G}(\pi) = \sum_{N=1}^{T-t} \bm{G}(\pi,t+N) = \sum_{N=1}^{T-t} \Bigg[ D_{\mathrm{KL}}[Q(O_{t+N}|\pi)||P(O_{t+N})]\,\, +\,\, \mathbb{E}_{Q(S_{t+N}|\pi)}[\text{H}[P(O_{t+N} | S_{t+N})]]\Bigg].
\end{align}
Additionally, under the assumption that the probability of observations and states are independent of future actions, i.e., that $\forall j \in \mathbb{N}_{>0}, Q(O_{t+i}|\pi_{i}) \approx Q(O_{t+i}|\pi_{i+j})$ and $\forall j \in \mathbb{N}_{>0}, Q(S_{t+i}|\pi_{i}) \approx Q(S_{t+i}|\pi_{i+j})$, $\pi$ can be replaced by $\pi_N$ in the RHS of the above equation, leading to:
\begin{align}\label{eq:efe_rearranged_4242}
\bm{G}(\pi) = \sum_{N=1}^{T-t} \bm{G}(\pi_N,t+N) = \sum_{N=1}^{T-t} \Bigg[ D_{\mathrm{KL}}[Q(O_{t+N}|\pi_N)||P(O_{t+N})]\,\, +\,\, \mathbb{E}_{Q(S_{t+N}|\pi_N)}[\text{H}[P(O_{t+N} | S_{t+N})]]\Bigg].
\end{align}
Importantly, the elements of the above summation constitute the localized cost presented in Definition \ref{def:localized_efe}.

\begin{definition} \label{def:localized_efe}
We define the \textbf{localized cost} received at time $t+N$ after selecting policy $\pi_N$ as:
\begin{align}
\bm{G}^{local}_{\pi_N} = \bm{G}(\pi_N,t+N) = D_{\mathrm{KL}}[Q(O_{t+N}|\pi_N)||P(O_{t+N})]\,\, +\,\, \mathbb{E}_{Q(S_{t+N}|\pi_N)}[\text{H}[P(O_{t+N} | S_{t+N})]].
\end{align}
\end{definition}
Importantly, the localized cost quantifies the amount of risk and ambiguity received by the agent at time step $t+N$, assuming that it will follow the policy $\pi_N$. We now turn to the notion of aggregated cost of a policy of size $N$. Definition \ref{def:aggregated_efe} states that the aggregated cost of a policy is defined recursively. Indeed, by definition, a policy of size zero has an aggregated cost of zero, and then, the aggregated cost of a policy $\pi_N$ (of size $N$) is equal to the the aggregated cost of $\pi_{N - 1}$ (of size $N - 1$) plus the localized cost received at time $t+N$.

\begin{definition} \label{def:aggregated_efe}
We define the \textbf{aggregated cost} of a policy $\pi_N$ of size $N$ as:
\begin{align}
\bm{G}^{aggre}_{\pi_N} = \left\{ \begin{matrix}
0 & \text{if } N = 0\\
\bm{G}^{aggre}_{\pi_{N - 1}} + \bm{G}^{local}_{\pi_N} & \text{otherwise}\\
\end{matrix}\right. .
\end{align}
\end{definition}

Equipped with Definitions \ref{def:localized_efe} and \ref{def:aggregated_efe}, we are now ready to state and prove Theorem \ref{th:abstract_AI} using the two Lemmas of Appendix E.

\begin{theorem}\label{th:abstract_AI}
Under the assumption that the probability of observations and states are independent of future actions, i.e., $\forall j \in \mathbb{N}_{>0}, Q(O_{t+i}|\pi_{i}) \approx Q(O_{t+i}|\pi_{i+j})$ and $\forall j \in \mathbb{N}_{>0}, Q(S_{t+i}|\pi_{i}) \approx Q(S_{t+i}|\pi_{i+j})$, the expected free energy can be written as:
\begin{align}\label{abstract_AI}
\bm{G}(\pi_N) \approx \bm{G}_{\pi_{N}}^{aggre} = \bm{G}_{\pi_{N-1}}^{aggre} + \bm{G}_{\pi_{N}}^{local}.	
\end{align}
\end{theorem}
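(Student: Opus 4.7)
The plan is to reduce the theorem to two moves: unrolling the recursive Definition \ref{def:aggregated_efe} into a sum of localized costs, and then matching that sum against $\bm{G}(\pi_N)$ using the independence-from-future-actions assumption. The second equality in \eqref{abstract_AI} is literally Definition \ref{def:aggregated_efe}, so all real content lies in the approximation $\bm{G}(\pi_N) \approx \bm{G}_{\pi_N}^{aggre}$.

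First, I would prove by induction on $N$ (presumably this is one of the lemmas in Appendix E) that
\begin{align*}
\bm{G}_{\pi_N}^{aggre} = \sum_{k=1}^{N} \bm{G}_{\pi_k}^{local} = \sum_{k=1}^{N} \bm{G}(\pi_k, t+k).
\end{align*}
The base case $N=0$ gives $0$ by Definition \ref{def:aggregated_efe}; the inductive step adds $\bm{G}_{\pi_N}^{local}$ to the induction hypothesis, again by Definition \ref{def:aggregated_efe}.

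Second, I would start from the rewriting \eqref{eq:efe_rearranged_4242} applied to $\pi_N$ with horizon $T = t+N$, giving
\begin{align*}
\bm{G}(\pi_N) = \sum_{k=1}^{N} \bm{G}(\pi_N, t+k) = \sum_{k=1}^{N} \Bigl[\kl{Q(O_{t+k}|\pi_N)}{P(O_{t+k})} + \mathbb{E}_{Q(S_{t+k}|\pi_N)}[\text{H}[P(O_{t+k}|S_{t+k})]]\Bigr].
\end{align*}
For each $k \in \{1,\dots,N\}$, I apply the hypothesis with $i = k$ and $j = N - k \geq 0$ to replace $Q(O_{t+k}|\pi_N)$ by $Q(O_{t+k}|\pi_k)$ and $Q(S_{t+k}|\pi_N)$ by $Q(S_{t+k}|\pi_k)$ inside each summand. (When $j=0$ the step is exact; otherwise it is the stated approximation.) Each summand becomes exactly $\bm{G}(\pi_k, t+k) = \bm{G}_{\pi_k}^{local}$, so the sum coincides with the unrolled expression from the first step, yielding $\bm{G}(\pi_N) \approx \bm{G}_{\pi_N}^{aggre}$. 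Combining with Definition \ref{def:aggregated_efe} gives \eqref{abstract_AI}.

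The main obstacle I foresee is bookkeeping rather than mathematics. One has to be careful that $\bm{G}(\pi_N)$ truly means ``the EFE of the truncated policy $\pi_N$ evaluated over the $N$ future steps it actually covers'' rather than ``the EFE of the full-horizon policy $\pi$ restricted in notation.'' If instead $\bm{G}(\pi_N)$ was taken as the sum up to the global horizon $T$, one would need additional justification (e.g.\ that undetermined later actions contribute zero, or that the independence assumption extends to a uniform posterior predictive). I would include a brief remark clarifying that $\bm{G}(\pi_N)$ denotes the EFE summed over the $N$ steps $\tau = t+1,\dots,t+N$, which is what makes the recursion in Definition \ref{def:aggregated_efe} coherent with \eqref{eq:efe_rearranged_4242}.
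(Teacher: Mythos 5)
Your proposal is correct and follows essentially the same route as the paper: the paper's Appendix E Lemma 2 is exactly your induction unrolling $\bm{G}^{aggre}_{\pi_N}$ into $\sum_k \bm{G}(\pi_k, t+k)$, and its Lemma 1 is your per-summand application of the independence assumption with $j = N-k$ (the paper merely peels off the $k=N$ term first so that the lemma is only invoked with $j>0$, which you handle by noting the $j=0$ case is exact). Your closing remark about the meaning of $\bm{G}(\pi_N)$ is resolved in the paper by simply setting $N = T - t$ at the outset, so the two sums coincide.
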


\begin{proof}
This proof is based on two lemmas demonstrated in Appendix E. Note that in active inference the expected free energy is defined as:
\begin{align}
\bm{G}(\pi) = \sum_{\tau=t + 1}^{T} \bm{G}(\pi, \tau).
\end{align}
Let $N$ denote the size of the policy $\pi$, i.e. $N = T - t$. Note that because $\pi$ is of size $N$, then by definition $\pi = \pi_N$, and the above equation can be re-written as:
\begin{align}
\bm{G}(\pi) = \bm{G}(\pi_N) = \sum_{\tau=t + 1}^{t+N} \bm{G}(\pi_{N}, \tau).
\end{align}
Expanding the summation and using Definition \ref{def:localized_efe}:
\begin{align}
\bm{G}(\pi_N) &= \sum_{\tau=t + 1}^{t+N-1} \bm{G}(\pi_N, \tau) + \bm{G}(\pi_N,t+N)\\
&= \sum_{\tau=t + 1}^{t+N-1} \bm{G}(\pi_N, \tau) + \bm{G}^{local}_{\pi_N}.
\end{align}
If, instead of letting $\tau$ range from $t + 1$ to $t+N-1$, we let $i$ range from $1$ to $N-1$, then the above equation can be re-written as:
\begin{align}
\bm{G}(\pi_N) = \sum_{i=1}^{N-1} \bm{G}(\pi_N, t + i) + \bm{G}^{local}_{\pi_N}.
\end{align}
Note that $\forall i \in \{1, ..., N-1\}, N > i$, and thus there exists a $k_i \in \mathbb{N}_{>0}$ such that $\forall i \in \{1, ..., N-1\}, N = i + k_i$. Therefore, we replace $N$ by $i + k_i$ in the above summation:
\begin{align}
\bm{G}(\pi_N) = \sum_{i=1}^{N-1} \bm{G}(\pi_{i+k_i}, t + i) + \bm{G}^{local}_{\pi_N}.
\end{align}
Lemma \ref{lemma_51} tells us that under the assumption that the probability of observations and states are independent of future actions, $\forall k_i \in \mathbb{N}_{>0}, \bm{G}(\pi_{i+k_i}, t+i) \approx \bm{G}(\pi_i, t+i)$, which allows us to remove the $k_i$ to get:
\begin{align}
\bm{G}(\pi_N) \approx \sum_{i=1}^{N-1} \bm{G}(\pi_i, t + i) + \bm{G}^{local}_{\pi_N}.
\end{align}
Finally, Lemma \ref{lemma_52} states that $\sum_{i=1}^{N-1} \bm{G}(\pi_i, t + i) = \bm{G}^{aggre}_{\pi_{N-1}}$, and thus:
\begin{align}
\bm{G}(\pi_N) \approx \bm{G}^{aggre}_{\pi_{N-1}} + \bm{G}^{local}_{\pi_N} \delequal \bm{G}^{aggre}_{\pi_N}.
\end{align}
The above equation will be used in Section \ref{ssec:BTAI_generalize_AI} to show that BTAI generalizes active inference.
\end{proof}

\subsection{Sophisticated inference}

Sophisticated inference \citep{Sophisticated_INF} is a new type of active inference that defines the EFE recursively from the time horizon backward. Intuitively, the agent does not simply ask ``what would happen if I did that", but instead wonders ``what would I believe about what would happen if I did that". In other words, the agent is exhibiting a form of sophistication, which refers to the fact of having beliefs about one's own or another's beliefs. \citet{Sophisticated_INF} also replaced variational message passing by an alternative inference scheme called Bayesian Filtering \citep{BAYESIAN_FILTERING}. While the change of inference method is of little relevance to us here, the recursive definition of the EFE is at the core of this section. As explained in Section 4.3 of \citet{dacosta2020relationship}, the (recursive) EFE of a Markov decision process is given by:
\begin{align}
G(U_{T-1},S_{T-1})&= \kl{Q(S_{T}|U_{T-1},S_{T-1})}{V(S_{T})}\\
G(U_\tau,S_\tau)&= \kl{Q(S_{\tau+1}|U_\tau,S_\tau)}{V(S_{\tau+1})} + \mathbb{E}_{Q(U_{\tau+1},S_{\tau+1}|U_\tau,S_\tau)}[G(U_{\tau+1},S_{\tau+1})]\label{eq:recu_def_of_efe_in_SI}
\end{align}
where $U_\tau$ and $S_\tau$ are the action and state at time $\tau$, and $V(S_{\tau})$ is the target (i.e., desired) distribution over states at time $\tau$. Using our terminology of localized and aggregated cost, this can be rewritten as:
\begin{align}
\underbrace{G(U_{T-1},S_{T-1})}_{\bm{G}^{aggre}(U_{T-1},S_{T-1})} &= \,\,\underbrace{\kl{Q(S_{T}|U_{T-1},S_{T-1})}{V(S_{T})}}_{\bm{G}^{local}(U_{T-1},S_{T-1})} \\
\underbrace{G(U_\tau,S_\tau)}_{\bm{G}^{aggre}(U_{\tau},S_{\tau})} &= \underbrace{\kl{Q(S_{\tau+1}|U_\tau,S_\tau)}{V(S_{\tau+1})}}_{\bm{G}^{local}(U_{\tau},S_{\tau})} + \mathbb{E}_{Q(U_{\tau+1},S_{\tau+1}|U_\tau,S_\tau)}[\underbrace{G(U_{\tau+1},S_{\tau+1})}_{\bm{G}^{aggre}(U_{\tau+1},S_{\tau+1})}]
\end{align}
Put simply, the aggregated cost of taking action $U_\tau$ in state $S_\tau$ can be computed by summing the localized cost at time step $\tau$ and the expected aggregated cost at time step $\tau + 1$, i.e.,
\begin{align}\label{abstract_SI}
\bm{G}^{aggre}(U_{\tau},S_{\tau}) = \bm{G}^{local}(U_{\tau},S_{\tau}) + \mathbb{E}_{Q(U_{\tau+1},S_{\tau+1}|U_\tau,S_\tau)}[\bm{G}^{aggre}(U_{\tau+1},S_{\tau+1})]
\end{align}
Note that for $\tau = T - 1$ the second term vanishes because future states beyond the temporal horizon are ignored, and thus $\bm{G}^{aggre}(U_{T-1},S_{T-1}) = \bm{G}^{local}(U_{T-1},S_{T-1})$. Also, the above equation will be useful in Section \ref{ssec:BTAI_relate_to_SI} to show that BTAI is related to sophisticated inference.

\begin{remark}
The recursive aspect of Equation \ref{eq:recu_def_of_efe_in_SI} is deeply related to dynamic programming and the interested reader is referred to \citet{dacosta2020relationship} for details about this relationship.
\end{remark}

\subsection{Branching Time Active Inference (BTAI)}

In BTAI, the (localized) cost of the hidden state $S_I$ is defined as $\bm{G}^{local}_I = g_I$, where $g_I$ can be equal to $g^{classic}_I$, $g^{feef}_I$ or $g^{pcost}_I$, and there are two ways of computing the aggregated cost of $S_I$. We can either propagate the localized cost towards the leaves (forward):
\begin{align}
g_I \leftarrow g_I + g_\IdMLast{I}
\end{align}
where the $g_\IdMLast{I}$ is the cost of the parent of $S_I$. Alternatively, we can back-propagate the cost towards the root
\begin{align}
g_J \leftarrow g_J + g_I \quad \forall J \in \mathbb{A},
\end{align}
where $\mathbb{A}$ corresponds to all ancestors of the newly expanded node $S_I$. 

\subsection{BTAI as a generalisation of active inference}\label{ssec:BTAI_generalize_AI}

To understand the relationship between BTAI and active inference, we need to focus on the forward propagation of the cost where the cost is given by $g^{classic}_I$. Recall that the update for forward propagation is given by:
\begin{align}
g^{classic}_I \leftarrow g^{classic}_I + g^{classic}_\IdMLast{I},
\end{align}
where the $g_\IdMLast{I}$ is the cost of the parent of $S_I$, i.e., the parent of the newly expanded node. This equation tells us that the aggregated cost of $S_I$ is equal to the localized cost of $S_I$ plus the aggregated cost of $S_\IdMLast{I}$, i.e.,
\begin{align} \label{AITS_as_AI}
\underbrace{g^{classic}_I}_{\bm{G}^{aggre}_I} \leftarrow \underbrace{g^{classic}_I}_{\bm{G}^{local}_I} + \underbrace{g^{classic}_\IdMLast{I}}_{\bm{G}^{aggre}_\IdMLast{I}} \quad \Leftrightarrow \quad \bm{G}^{aggre}_I = \bm{G}^{local}_I + \bm{G}^{aggre}_\IdMLast{I},
\end{align}
but, then, we also recall \eqref{abstract_AI}, i.e.,
\begin{align}
\bm{G}_{\pi_{N}}^{aggre} \approx \bm{G}_{\pi_{N-1}}^{aggre} + \bm{G}_{\pi_{N}}^{local} \quad \Leftrightarrow \quad \bm{G}_{\pi_{N}}^{aggre} \approx \bm{G}_{\pi_{N}}^{local} + \bm{G}_{\pi_{N-1}}^{aggre}.
\end{align}
The only difference between Equations \ref{abstract_AI} and \ref{AITS_as_AI} is notational. Indeed, in BTAI (Eq. \ref{abstract_AI}) a policy is represented by a multi-index denoting the sequence of actions selected, e.g., $I = (1, 2)$ corresponds to a policy of size two consisting of action one followed by action two. In contrast, in active inference, a policy is a sequence of actions, e.g., $\pi_{2} = (1, 2)$ corresponds to the same policy as the one described by $I$.

\subsection{Relationship between BTAI and sophisticated inference} \label{ssec:BTAI_relate_to_SI}

The relationship between BTAI and sophisticated inference is slightly more involved. The backward propagation equation, i.e.,
\begin{align}
g_J \leftarrow g_J + g_I \quad \forall J \in \mathbb{A},
\end{align}
tells us that when expanding a node $S_I$, we first need to compute its localized cost $g_I$ and then add $g_I$ to the aggregated cost of its ancestors $S_J$ where $J \in \mathbb{A}$. In other words, we can rewrite the backward propagation equation as the following: the aggregated cost of an arbitrary node $S_J$ will equal the sum of its localized cost $g_J$ and that of its descendants $\mathbb{D}_J$ that have already been evaluated
\begin{align}
\bm{G}_J^{aggre} = \bm{G}_J^{local} + \sum_{S_K \in \mathbb{D}_J} \bm{G}_K^{local},
\end{align}
where the descendants are the children, children of children, etc. We can further simplify this expression by grouping the summands by children of $S_J$. This leads us to:
\begin{align}\label{AITS_as_SI}
\bm{G}_J^{aggre} &= \bm{G}_J^{local} + \sum_{S_I \in \text{ch}_J} \underbrace{\left(\bm{G}_I^{local} + \sum_{S_K \in \mathbb{D}_I} \bm{G}_K^{local}\right)}_{\bm{G}_I^{aggre}} \\
&= \bm{G}_J^{local} + \sum_{S_I \in \text{ch}_J} \bm{G}_I^{aggre},
\end{align}
where $\text{ch}_J$ are the children of $S_J$ and $\mathbb{D}_I$ are the descendants of $S_I$. The above equation has clear similarities to Equation \eqref{abstract_SI}, which is, 
\begin{align}
\bm{G}^{aggre}(U_{\tau},S_{\tau}) = \bm{G}^{local}(U_{\tau},S_{\tau}) + \mathbb{E}_{Q(U_{\tau+1},S_{\tau+1}|U_\tau,S_\tau)}[\bm{G}^{aggre}(U_{\tau+1},S_{\tau+1})].
\end{align}
However, the second term of the RHS of \eqref{abstract_SI} is an expectation, while the second term of the RHS of \eqref{AITS_as_SI} is a summation over the children that have already been expanded. The expectation in \eqref{abstract_SI} is w.r.t.
\begin{align}
    Q(U_{\tau+1},S_{\tau+1}|U_\tau,S_\tau) &= Q(U_{\tau+1}|S_{\tau+1})Q(S_{\tau+1}|U_\tau,S_\tau),
\end{align}
where:
\begin{align}
    Q(U_{\tau+1}|S_{\tau+1}) > 0 &\Leftrightarrow U_{\tau+1} \in \argmins_{U \in \mathbb{U}} \bm{G}^{aggre}(U,S_{\tau+1}),
\end{align}
where $\mathbb{U}$ is the set of all possible actions, and argmins is defined as:
\begin{align}
M = \argmins_{U \in \mathbb{U}} f(U) \Leftrightarrow M = \Big\{ U \in \mathbb{U} \,\,\big|\,\, f(U) = \min_{U' \in \mathbb{U}} f(U')\Big\}.
\end{align}
This means that an action is assigned positive probability mass if and only if it minimises the aggregated cost at the next time point $\bm{G}^{aggre}(U_{\tau+1},S_{\tau+1})$ and a set is required, because multiple actions could have the same minimum cost. Note that if there is a unique minimum, then $Q(U_{\tau+1}|S_{\tau+1})$ will be a one-hot like distribution with a probability of one for the best action. 

To conclude, Equations \eqref{abstract_SI} and \eqref{AITS_as_SI} suggest that BTAI and SI share a similar notion of EFE, where the immediate (or localized) EFE is added to the future (or aggregated) EFE. Both BTAI and SI propagate the cost backward, however, in SI the aggregated EFE (i.e., the back-propagated cost) is weighted by the probability of the next action and states, i.e., $Q(U_{\tau+1},S_{\tau+1}|U_\tau,S_\tau)$. Intuitively, the weighting terms in SI discounts the impact of the back-propagated cost for unlikely states and (locally) sub-optimal actions. Importantly, those weighting terms emerge from the recursive definition of the EFE that relates to the Bellman equation \citep{dacosta2020relationship}. In contrast, there are no such weights in BTAI because BTAI finds its inspiration in active inference.

\section{Conclusion and future works} \label{sec:conclusion}

In this paper, we have presented a new approach where planning is cast as structure learning. Simply put, this approach consists of dynamically expanding branches of the generative model by evaluating alternative futures under different action sequences. The dynamic expansion trades off evaluating promising (with repect to the target distribution) policies with exploring policies whose outcomes are uncertain. We proposed two different tree search methods: the first in which the nodes' cost is propagated forward from the root node to the leaves; the second in which the nodes' cost is propagated backward from the leaves to the root node. Then, in Section \ref{sec:aits_ai_sai} we showed that forward propagation of the EFE leads to active inference (AcI) under the assumption that the probability of observations and states are independent of future actions, and that backward propagation relates to sophisticated inference (SI). This clarifies the link between AcI and BTAI, and helps to understand the relationship between AcI and SI.

Importantly, by performing a complexity class analysis, we have shown that while Active Inference suffers from an exponential complexity class, our approach scales nicely (linearly) with the number of tree expansions, c.f., Section 3.4.2 of our companion paper \citep{AITS_PRACTICE}. Of course, the total number of possible expansions grows exponentially but as has been empirically shown in the reinforcement learning literature, even complex tasks, such as chess or the game of go, can be performed efficiently with MCTS \citep{Go,MuZero}, i.e., the efficiency of MCTS-like approaches relies on the ability to guide the expansion procedure using either powerful heuristics (like the EFE) or neural networks or both.

We also know that humans engage in counterfactual reasoning \citep{rafetseder2013counterfactual}, which, in our planning context, could involve the consideration and evaluation of alternative (non-selected) sequences of decisions. It may be that, because of the more exhaustive representation of possible trajectories, the classic active inference can more efficiently engage in counterfactual reasoning. In contrast, branching-time active inference would require these alternatives to be generated ``a fresh" for each counterfactual deliberation. In this sense, one might argue that there is a trade-off: branching-time active inference provides considerably more efficient planning to attain current goals, classic active inference provides a more exhaustive assessment of paths not taken.

Now that we have laid out the mathematics of BTAI, many directions of research could be investigated. One could for example obtain an intuitive understanding of the model's parameters through experimental study. At first it might be necessary to restrict oneself to agents without learning, i.e., inference only. This step should help answer questions such as: How does the number of expansions of the tree and the quality of the prior preferences impact the quality of planning? What is the best inference method (i.e., local or global inference) to use during planning?

Then, one could consider learning of the transition and likelihood matrices as well as the vector of initial states. This can be done in at least two ways. The first is to add Dirichlet priors over those matrices/vectors and the second would be to use neural networks as function approximators. The second option will lead to a deep active inference agent \citep{PixelBasedAI,DeepAI} equipped with tree search that could be directly compared to the method of \citet{DeepAIwithMCMC}. Including deep neural networks in the framework will also enable direct comparison with the deep reinforcement learning literature \citep{DBLP:journals/corr/abs-1801-01290,DeepRL,DDQN,lample2016playing,Go}. These comparisons will enable the impact of epistemic terms to be studied when the agent is composed of deep neural networks.

Another, very important direction for future research would be the creation of a biologically plausible implementation of BTAI. For example, using artificial neural networks to model the various mappings of the framework may provide a neural-based implementation of BTAI that is closer to biology. This would especially be the case, if the back-propagation algorithm frequently used for learning is replaced by the (more biologically plausible) generalized recirculation algorithm \citep{DBLP:journals/neco/OReilly96}. Another possible approach would be to use populations of neurons to encode the update equations of the framework, as was proposed by \citet{believe}.

Whatever technique is chosen for learning and inference, implementing MCTS in a biologically plausible way will be challenging. Indeed, MCTS requires a dynamic expansion of the search tree used to explore the space of possible policies. Each time an expansion is performed, the agent needs to store the associated variables such as: the number of visits, the aggregated expected free energy, and the posterior beliefs of the newly expanded node. Given the fast pace at which planning must be performed to be useful, slow mechanisms such as synaptic plasticity and neurogenesis are likely to be unsuitable for the task. A more plausible approach might rely upon a change of neuronal activation, which can occur within a few hundred milliseconds. One such approach uses a binding pool \citep{bowman2007simultaneous} and provides a notion of variable. In this framework, a variable is composed of two parts. First, a \textit{token} that can be intuitively understood as the variable's name, and second, a \textit{type} corresponding to the variable's value. The binding pool is then composed of neurons representing the fact that a variable's name is \textit{bound} (or set) to a specific value. A localist realisation of a binding pool could be implemented as a 2D array of neurons of size ``number of tokens" $\times$ ``number of types". However, such a representation is quite inefficient and a more compact (i.e. distributed) representation has been developed \citep{wyble2006neural}. If variables are complex data structures, such as those required by BTAI, they can be realised in a neural substrate.

Finally, in this paper, we focused on the UCT criterion for node selection because it is a standard choice in the reinforcement learning literature \citep{Go,MuZero}. However, it would be valuable to consider alternative criteria such as Thompson Sampling \citep{ThompsonOriginal,TutoThompson,AI_and_Thompson} or expected improvement \citep{EI_for_BO,ExpectedInprovement}. For example, Thompson Sampling has been shown to improve upon the standard UCT criterion when applied to MCTS \citep{Thompson_and_MCTS}, but requires additional modelling and we leave this for future research.

% Acknowledgements should go at the end, before appendices and references

\acks{
We would like to thank the reviewers for their valuable feedback, which greatly improved the quality of the present paper. LD is supported by the Fonds National de la Recherche, Luxembourg (Project code: 13568875). This publication is based on work partially supported by the EPSRC Centre for Doctoral Training in Mathematics of Random Systems: Analysis, Modelling and Simulation (EP/S023925/1).}
%We would like to acknowledge support for this project from the National Science Foundation (NSF grant IIS-9988642) and the Multidisciplinary Research Program of the Department of Defense (MURI N00014-00-1-0637).

\vskip 0.2in
\bibliography{references}

\appendix

\section*{Appendix A: Generalized inner and outer products}

\paragraph{Generalized outer products:}

Given $N$ vectors $V^i$, the generalized outer product returns an $N$ dimensional array $W$, whose element in position $(x_1,...,x_N)$ is given by $V^1_{x_1}\times ... \times V^N_{x_N}$, where $V^i_{x_j}$ is the $x_j$-th element of the $i$-th vector. In other words:
\begin{align}
W = \otimes \Big[ V^1, ..., V^N \Big] \Leftrightarrow W(x_1,...,x_N) &= V^1_{x_1}\times ... \times V^N_{x_N} \nonumber\\
&\forall x_j \in \{1, ..., |V^j|\} \,\, \forall j \in \{1, ..., N\},
\end{align}
where $|V^j|$ is the number of elements in $V^j$. Also, note that by definition $W$ is a $N$-tensor of size $|V^1| \times ... \times |V^N|$. Figure \ref{fig:outer-product} illustrates the generalized outer product for $N = 3$.

\begin{figure}[H]
	\begin{center}
	\includegraphics[scale=0.9]{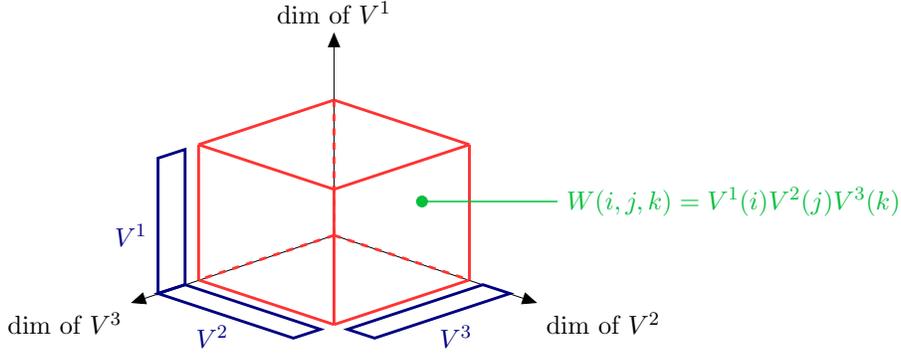}
	\end{center}
  \caption{This figure illustrates the generalized outer product $W = \otimes \big[ V^1, V^2, V^3 \big]$, where $W$ is a cube of values illustrated in red, whose typical element $W(i,j,k)$ is the product of $V^1(i)$, $V^2(j)$ and $V^3(k)$. Also, the vectors $V^i \,\, \forall i \in \{1, ..., 3\}$ are drawn in blue along the dimension of the cube they correspond to.}
   \label{fig:outer-product}
\end{figure}

\paragraph{Generalized inner products:}

Given an $N$-tensor $W$ and $M = N - 1$ vectors $V^i$, the generalized inner product returns a vector $Z$ obtained by performing a weighted average (with weighting coming from the vectors) over all but one dimension. In other words:
\begin{align}
Z = W \odot \Big[ V^1, ..., V^{M}\Big] \Leftrightarrow Z(x_j) = \sum_{\substack{x_1 \in \{1, ..., |V^1|\}\\
{\color{white}\}}...{\color{white}\}}\nonumber\\
x_M \in \{1, ..., |V^M|\} }} V^1_{x_1} \times ... \times W(x_1, ..., & x_j, ..., x_{M}) \times ... \times V^{M}_{x_{M}}\\
& \forall x_j \in \{1, ..., |Z|\},
\end{align}
where $|Z|$ denotes the number of elements in $Z$, and the large summand is over all $x_r$ for $r \in \{1, ..., M\} \setminus \{j\}$, i.e., excluding $j$. Also, note that if $|W|_{V^i} \,\, \forall i \in \{1, ..., M\}$ is the number of elements in the dimension corresponding to $V^i$, then for $W \odot \big[ V^1, ..., V^{M}\big]$ to be properly defined, we must have $|W|_{V^i} = |V^i| \,\, \forall i \in \{1, ..., M\}$ where $|V^i|$ is the number of elements in $V^i$. Figure \ref{fig:inner-product} illustrates the generalized inner product for $N = 3$.

\begin{figure}[H]
	\begin{center}
	\includegraphics[scale=0.8]{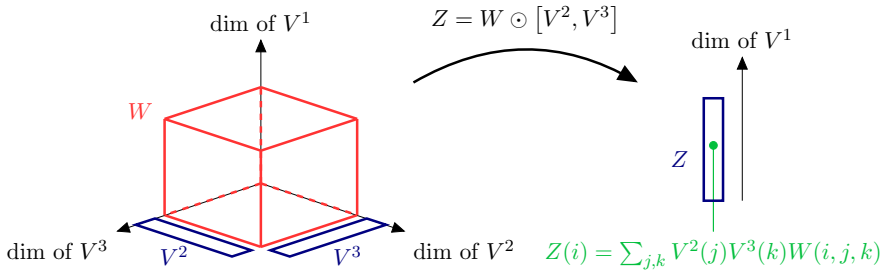}
	\end{center}
  \caption{This figure illustrates the generalized inner product $Z = W \odot \big[ V^2, V^3 \big]$, where $W$ is a cube of values illustrated in red with typical element $W(i,j,k)$. Also, the vectors $Z$ and $V^i \,\, \forall i \in \{2,3\}$ are drawn in blue along the dimension of the cube they correspond to.}
   \label{fig:inner-product}
\end{figure}

\paragraph{Naming of the dimensions:}

Importantly, we should imagine that each side of $W$ has a name, e.g., if $W$ is a 3x2 matrix, then the $i$-th dimension of $W$ could be named: ``the dimension of $V_i$". This enables us to write: $Z^1 = W \odot V^1$ and $Z^2 = W \odot V^2$, where $Z^1$ is a 1x2 matrix (i.e., a vector with two elements) and $Z^2$ is a 3x1 matrix (i.e., a vector with three elements). The operator $\odot$ knows (thanks to the dimension name) that $W \odot V^1$ takes the weighted average w.r.t ``the dimension of $V_1$", while $W \odot V^2$ must take the weighted average over ``the dimension of $V_2$".

In the context of active inference, the matrix $\bm{A}$ has two dimensions that we could call ``the observation dimension" (i.e., row-wise) and ``the state dimension" (i.e., column-wise). Trivially, $\bm{A} \odot \bm{o}_\tau$ will then correspond to the average of $\bm{A}$ along the observation dimension and $\bm{A} \odot \bm{\hat{D}}_\tau$ will correspond to the average of $\bm{A}$ along the state dimension. 

\section*{Appendix B: Generalized inner/outer products and other well-known products}

In this section, we explore the relationship between our generalized inner and outer products---presented in appendix A---and other well known products in the literature.

\subsection*{Inner product of two vectors}

The inner product of two vectors $\vec{a}$ and $\vec{b}$ of the same size is given by:
\begin{align}
\vec{a} \cdot \vec{b} = \sum_{i=1}^{|\vec{a}|} a_i b_i,
\end{align}
where $a_i$ and $b_i$ are the elements of the vectors $\vec{a}$ and $\vec{b}$, respectively, and $|\vec{a}|$ is the number of elements in $\vec{a}$. This product is a special case of our generalised inner product, i.e.,
\begin{align}
Z = W \odot V \Leftrightarrow Z = \sum_{i=1}^{|W|} W_i V_i,
\end{align}
where $Z$ is a scalar (a 0-tensor), $W$ and $V$ are two vectors (two 1-tensors) of the same size, and $|W|$ is the number of elements in $W$.

\subsection*{Inner product of two matrices (Frobenius inner product)}

The inner product of two matrices $\bm{A}$ and $\bm{B}$ of same sizes is given by:
\begin{align}
\langle \bm{A}, \bm{B}\rangle = \sum_{i=1}^{|\bm{A}|_1}\sum_{j=1}^{|\bm{A}|_2} a_{ij} b_{ij},
\end{align}
where $|\bm{A}|_1$ is the number of rows in $\bm{A}$, $|\bm{A}|_2$ is the number of columns in $\bm{A}$, and $a_{ij}$ ($b_{ij}$) are the elements of the matrices $\bm{A}$ (respectively $\bm{B}$). This product is not a special case of our generalised inner product.

\subsection*{Inner product of two tensors (Frobenius inner product)}

The inner product of two tensors $\bm{A}$ and $\bm{B}$ of same sizes is given by:
\begin{align}
\langle \bm{A}, \bm{B}\rangle = \sum_{i_1=1}^{|\bm{A}|_1} ... \sum_{i_n=1}^{|\bm{A}|_n} a(i_1, ..., i_n) b(i_1, ..., i_n),
\end{align}
where $a(i_1, ..., i_n)$ and $b(i_1, ..., i_n)$ are the elements of the tensors $\bm{A}$ and $\bm{B}$, respectively, and $|\bm{A}|_i$ is the number of elements in the i-th dimension of $\bm{A}$. This product is not a special case of our generalised inner product.

\subsection*{Standard matrix multiplication}

Let $\bm{A}$ be an $n \times m$ matrix and $\vec{b}$ be a vector of size $m$. The standard matrix multiplication of $\bm{A}$ by $\vec{b}$ is given by:
\begin{align}
\vec{c} = \bm{A} \vec{b} \Leftrightarrow \vec{c}_i = \sum_{j = 1}^{m} \bm{A}_{ij} \vec{b}_j.
\end{align}
This is a special case of our generalised inner product, i.e.,
\begin{align}
\vec{c} = \bm{A}\vec{b} = \bm{A} \odot \vec{b}.
\end{align}
Additionally, let $\bm{A}$ be an $n \times m$ matrix and $\vec{a}$ be a vector of size $n$, then:
\begin{align}
\vec{c} = \bm{A}^T \vec{a} \Leftrightarrow \vec{c}_i = \sum_{j = 1}^{n} \bm{A}_{ji} \vec{a}_j.
\end{align}
This is a special case of our generalised inner product, i.e.,
\begin{align}
\vec{c} = \bm{A}^T\vec{a} = \bm{A} \odot \vec{a}.
\end{align}
Note that because the dimensions are ``named" (c.f. Appendix A) the operator performs the transposition implicitly.

\subsection*{Outer product of two vectors}

Given two vectors $\vec{a}$ and $\vec{b}$, there outer product---denoted $\vec{a} \otimes \vec{b}$---is a matrix defined as:

\begin{align}
\vec{a} \otimes \vec{b} = \begin{bmatrix}
a_1b_1 & \ldots & a_1b_n\\
\vdots & \ddots & \vdots\\
a_mb_1 & \ldots & a_mb_n\\
\end{bmatrix}.
\end{align}

This outer product is a special case of our generalised outer product, where the operator is applied to only two vectors, i.e.
\begin{align}
\vec{a} \otimes \vec{b} = \otimes \Big[\vec{a}, \vec{b}\Big].
\end{align}

\subsection*{Outer product of two tensors}

Given two tensors $\bm{U}$ and $\bm{V}$, the outer product of $\bm{U}$ and $\bm{V}$ is another tensor $\bm{W}$ such that:
\begin{align}
\bm{W}(i_1,...,i_n,j_1,...,j_m) = \bm{V}(i_1,...,i_n)\bm{U}(j_1,...,j_m) \,\, &\forall i_\alpha \in \{1, ..., |\bm{V}|_\alpha \} \forall \alpha \in \{1, ..., n\}\nonumber\\
&\forall j_\beta \in \{1, ..., |\bm{U}|_\beta \} \forall \beta \in \{1, ..., m\}.
\end{align}
Given $N$ vectors $V^i \,\, \forall i \in \{1, ..., N\}$, our outer product is a sequence of outer tensor products, i.e.,
\begin{align}
\otimes \Big[ V^1, ..., V^N \Big] = \Bigg[\Big[V^1 \otimes_{\text{tensor}} V^2 \Big] \otimes_{\text{tensor}} ... \Bigg] \otimes_{\text{tensor}} V^N,
\end{align}
where $\otimes_{\text{tensor}}$ and $\otimes$ are the tensor and generalised outer products, respectively.

\subsection*{Kronecker product}

Given two matrices $\bm{A}$ and $\bm{B}$, the Kronecker product of $\bm{A}$ and $\bm{B}$---denoted $\otimes_K$---is a generalisation of the outer product from vectors to matrices defined as:
\begin{align}
\bm{A} \otimes_K \bm{B} = \begin{bmatrix}
a_{11}\bm{B} & \ldots & a_{1n}\bm{B}\\
\vdots & \ddots & \vdots\\
a_{m1}\bm{B} & \ldots & a_{mn}\bm{B}
\end{bmatrix},
\end{align}
where $a_{ij}$ are the elements of $\bm{A}$. Note that even if the Kronecker product is a generalisation of the outer product, it is neither a special case nor a generalisation of our generalized outer product.

\subsection*{Hadamard product}

Given two matrices $\bm{A}$ and $\bm{B}$ of the same size, the Hadamard product of $\bm{A}$ and $\bm{B}$ is an element-wise product defined by:
\begin{align}
\bm{C} = \bm{A} \odot \bm{B} \Leftrightarrow c_{ij} = a_{ij} b_{ij} \,\,\forall i \in \{1, ..., |\bm{A}|_1\} \forall j \in \{1, ..., |\bm{A}|_2\},
\end{align}
where $a_{ij}$, $b_{ij}$ and $c_{ij}$ are the elements in the $i$-th row and $j$-th column of $\bm{A}$, $\bm{B}$ and $\bm{C}$, respectively, $|\bm{A}|_1$ is the number of rows in $\bm{A}$, and $|\bm{A}|_2$ is the number of columns in $\bm{A}$. This product is unrelated to both our generalised inner and outer products.

\section*{Appendix C: Instance of variational message passing}

This appendix provides a concrete instance of the method of Winn and Bishop discussed in Section \ref{sec:vmp}. The generative model is as follows:
\begin{align}
P(S,\bm{D}) = P(S|\bm{D})P(\bm{D})
\end{align}
where:
\begin{align}
P(S|\bm{D}) = \text{Cat}(\bm{D})\\
P(\bm{D}) = \text{Dir}(\bm{d}).
\end{align}
Additionally, the variational distribution is given by:
\begin{align}
Q(\bm{D}) = \text{Dir}(\bm{\hat{d}}),
\end{align}
which means that we assume that $S$ is an observed random variable. Let us start with the definition of the Dirichlet and categorical distributions written in the form of the exponential family:
\begin{align}\label{eq:D:0}
\ln P(\bm{D}) &= \underbrace{\begin{bmatrix}
\bm{d}_1 - 1\\
...\\
\bm{d}_{|S|} - 1
\end{bmatrix}}_{\mu_{D}(\bm{d})} \cdot
\underbrace{\begin{bmatrix}
\ln \bm{D}_1\\
...\\
\ln \bm{D}_{|S|}
\end{bmatrix}}_{u_{D}(\bm{D})} - 
\underbrace{\ln B(d)}_{z_{D}(\bm{d})}
\end{align}

\begin{align}\label{eq:D:1}
\ln P(S|\bm{D}) &= \underbrace{\begin{bmatrix}
\ln \bm{D}_1\\
...\\
\ln \bm{D}_{|S|}
\end{bmatrix}}_{\mu_{S}(\bm{D})}
\cdot
\underbrace{\begin{bmatrix}
[S = 1]\\
...\\
[S = |S|]
\end{bmatrix}}_{u_{S}(S)}
\end{align}
where $\cdot$ performs an inner product of the two vectors it is applied to, $B(\bm{d})$ is the Beta function and $|S|$ is the number of values a state can take. The first step requires us to re-write Equation \ref{eq:D:1} as a function of $u_{D}(\bm{D})$, which is straightforward because $\mu_{S}(\bm{D})$ is just another name for $u_{D}(\bm{D})$. Using the fact that the inner product is commutative:
\begin{align}\label{eq:D:2}
\ln P(S|\bm{D}) &= \underbrace{\begin{bmatrix}
[S = 1]\\
...\\
[S = |S|]
\end{bmatrix}}_{\mu_{S\rightarrow D}(S)}
\cdot
\underbrace{\begin{bmatrix}
\ln \bm{D}_1\\
...\\
\ln \bm{D}_{|S|}
\end{bmatrix}}_{u_{D}(\bm{D})}.
\end{align}

The second step aims to substitute \eqref{eq:D:0} and \eqref{eq:D:2} into the variational message passing equation \eqref{eq:5}, i.e.
\begin{align}
\ln Q^*(\bm{D}) &= \Big\langle \underbrace{\begin{bmatrix}
{\color{orange}\bm{d}_1 - 1}\\
...\\
{\color{orange}\bm{d}_{|S|} - 1}
\end{bmatrix}}_{\mu_{D}(\bm{d})} \cdot
\underbrace{\begin{bmatrix}
\ln \bm{D}_1\\
...\\
\ln \bm{D}_{|S|}
\end{bmatrix}}_{u_{D}(\bm{D})} - 
\underbrace{\ln B(\bm{d})}_{z_{D}(\bm{d})}
\Big\rangle + \Big\langle \underbrace{\begin{bmatrix}
{\color{purple}[S = 1]}\\
...\\
{\color{purple}[S = |S|]}
\end{bmatrix}}_{\mu_{S\rightarrow D}(S)}
\cdot
\underbrace{\begin{bmatrix}
\ln \bm{D}_1\\
...\\
\ln \bm{D}_{|S|}
\end{bmatrix}}_{u_{D}(\bm{D})} \Big\rangle + \text{Const},
\end{align}
where $\langle \bigcdot \rangle$ refers to $\langle \bigcdot \rangle_{\sim Q_{\bm{D}}}$. Note that in the above equation, $\bm{d}_i$ are fixed parameters, therefore there is no posterior over $d$ and the first expectation $\langle \cdot \rangle_{\sim Q_{\bm{D}}}$ can be removed. The third step rests on taking the exponential of both sides, using the linearity of expectation and factorising by $u_{D}(\bm{D})$ to obtain:
\begin{align}\label{eq:D:3.0}
Q^*(\bm{D}) &= \exp \Bigg\{ \begin{bmatrix}
{\color{orange}\bm{d}_1 - 1} + \langle {\color{purple}[S = 1]} \rangle\\
...\\
{\color{orange}\bm{d}_{|S|} - 1} + \langle {\color{purple}[S = |S|]} \rangle
\end{bmatrix}
\cdot u_{D}(\bm{D}) + \text{Const} \Bigg\},
\end{align}
where $z_{D}(\bm{d})$ have been absorbed into the constant term because it does not depend on $\bm{D}$. The fourth step is a re-parameterisation done by observing that $\langle [S = i] \rangle$ is the i-th element of the expectation of the vector $u_{S}(S)$, i.e. $\langle u_{S}(S) \rangle_i = \langle [S = i] \rangle$:

\begin{align}\label{eq:D:3}
Q^*(\bm{D}) &= \exp \Bigg\{ \underbrace{\begin{bmatrix}
{\color{orange}\bm{d}_1 - 1} + \langle {\color{purple}u_{S}(S)} \rangle_1\\
...\\
{\color{orange}\bm{d}_{|S|} - 1} + \langle {\color{purple}u_{S}(S)} \rangle_{|S|}
\end{bmatrix}}_{\tilde{\mu}_{D}(...) + \tilde{\mu}_{S \rightarrow D}(...)}
\cdot u_{D}(\bm{D}) + \text{Const} \Bigg\}.
\end{align}

The last step consists of computing the expectation of $\langle u_{S}(S) \rangle_i$ for all $i$. This can be achieved by realising that the probability of an indicator function for an event is the probability of this event, i.e $\langle u_{S}(S) \rangle_i = \langle [S = i] \rangle = Q(S = i) = \hat{\bm{D}}_{i}$ where $\hat{\bm{D}}$ is a one hot representation of the observed value for $S$. Substituting this result in Equation \ref{eq:D:3}, leads to the final result:
\begin{align}
Q^*(\bm{D}) &= \exp \Bigg\{ \begin{bmatrix}
{\color{orange}\bm{d}_1 - 1} + {\color{purple}\hat{\bm{D}}_{1}}\\
...\\
{\color{orange}\bm{d}_{|S|} - 1} + {\color{purple}\hat{\bm{D}}_{|S|}}
\end{bmatrix}
\cdot u_{D}(\bm{D}) + \text{Const}\Bigg\}.
\end{align}

Indeed, the above equation is in fact a Dirichlet distribution in exponential family form, and can be re-written into its usual form to obtain the final update equation:
\begin{align}
Q^*(\bm{D}) &= \text{Dir}({\color{orange}\bm{d}} + {\color{purple}\hat{\bm{D}}}).
\end{align}

\section*{Appendix D: Expected log of Dirichlet distribution}

\begin{definition}
A probability distribution over $\bm{x}$ parameterized by $\bm{\mu}$ is said to belong to the exponential family if its probability mass function $P(\bm{x}|\bm{\mu})$ can be written as:
\begin{align}
P(\bm{x}|\bm{\mu}) = h(\bm{x}) \exp\Big[ \bm{\mu} \cdot T(\bm{x}) - A(\bm{\mu})\Big],
\end{align}
where $h(\bm{x})$ is the base measure, $\bm{\mu}$ is the vector of natural parameters, $T(\bm{x})$ is the vector of sufficient statistics, and $A(\bm{\mu})$ is the log partition.
\end{definition}

\begin{lemma}\label{lemma:log_partition}
The log partition is given by:
\begin{align}
A(\bm{\mu}) = \ln \int h(\bm{x}) \exp \Big[\bm{\mu} \cdot T(\bm{x})\Big] d\bm{x}.
\end{align}
\end{lemma}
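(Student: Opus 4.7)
The plan is to derive the claimed expression directly from the defining property of the exponential family together with the fact that $P(\bm{x}|\bm{\mu})$ is a probability distribution, hence integrates to one over the support of $\bm{x}$. The strategy is purely algebraic: no analytic subtleties arise, because the log partition $A(\bm{\mu})$ is precisely the quantity introduced to normalise the distribution, so recovering it amounts to inverting that normalisation.

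Concretely, I would proceed in four short steps. First, I would start from the definition $P(\bm{x}|\bm{\mu}) = h(\bm{x}) \exp[\bm{\mu} \cdot T(\bm{x}) - A(\bm{\mu})]$ and invoke $\int P(\bm{x}|\bm{\mu})\, d\bm{x} = 1$. Second, I would observe that $A(\bm{\mu})$ does not depend on $\bm{x}$, so $\exp[-A(\bm{\mu})]$ can be pulled out of the integral, yielding
\begin{align}
\exp[-A(\bm{\mu})] \int h(\bm{x}) \exp\bigl[\bm{\mu} \cdot T(\bm{x})\bigr] d\bm{x} = 1.
\end{align}
Third, I would rearrange this to $\exp[A(\bm{\mu})] = \int h(\bm{x}) \exp[\bm{\mu} \cdot T(\bm{x})]\, d\bm{x}$. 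Fourth, I would take the natural logarithm of both sides to obtain the claimed formula.

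There is essentially no obstacle: the result is a one-line consequence of normalisation. The only point worth a brief remark is that the manipulation implicitly assumes the integral $\int h(\bm{x}) \exp[\bm{\mu} \cdot T(\bm{x})]\, d\bm{x}$ is finite for the values of $\bm{\mu}$ under consideration (i.e.\ $\bm{\mu}$ lies in the natural parameter space), which is guaranteed by $P(\bm{x}|\bm{\mu})$ being a well-defined probability distribution. Thus the entire proof is a direct application of the normalisation condition, and can be written in a few lines.
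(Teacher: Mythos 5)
Your proposal is correct and follows exactly the same route as the paper's proof: start from the normalisation $\int P(\bm{x}|\bm{\mu})\,d\bm{x}=1$, factor out $\exp[-A(\bm{\mu})]$ since it is independent of $\bm{x}$, rearrange, and take logarithms. Your added remark about finiteness of the integral on the natural parameter space is a reasonable (if implicit in the paper) caveat, but otherwise the two arguments coincide.
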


\begin{proof}
Starting with the fact that $P(\bm{x}|\bm{\mu})$ integrate to one:
\begin{align}
\int P(\bm{x}|\bm{\mu}) d\bm{x} = &\int h(\bm{x})\exp \Big[\bm{\mu} \cdot T(\bm{x}) - A(\bm{\mu}) \Big] d\bm{x} = 1\\
\Leftrightarrow \quad &\frac{1}{\exp A(\bm{\mu})} \int h(\bm{x})\exp \Big[\bm{\mu} \cdot T(\bm{x}) \Big] d\bm{x} = 1\\
\Leftrightarrow \quad &A(\bm{\mu}) = \ln \int h(\bm{x})\exp \Big[\bm{\mu} \cdot T(\bm{x}) \Big] d\bm{x}
\end{align}
\end{proof}

\begin{lemma}\label{lemma:expectation_gradient}
The gradient of the log partition function is the expectation of the sufficient statistics, i.e.,
\begin{align}
\frac{\partial A(\bm{\mu})}{\partial \bm{\mu}} = \mathbb{E}_{P(\bm{x}|\bm{\mu})}\Big[ T(\bm{x})\Big].
\end{align}
\end{lemma}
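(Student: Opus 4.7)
The plan is to start from the integral representation of the log partition supplied by Lemma \ref{lemma:log_partition}, namely $A(\bm{\mu}) = \ln \int h(\bm{x}) \exp[\bm{\mu} \cdot T(\bm{x})]\, d\bm{x}$, and differentiate both sides with respect to $\bm{\mu}$. Using the chain rule for $\ln$, the gradient becomes
\begin{align*}
\frac{\partial A(\bm{\mu})}{\partial \bm{\mu}} = \frac{1}{\int h(\bm{x}) \exp[\bm{\mu} \cdot T(\bm{x})]\, d\bm{x}} \cdot \frac{\partial}{\partial \bm{\mu}} \int h(\bm{x}) \exp[\bm{\mu} \cdot T(\bm{x})]\, d\bm{x},
\end{align*}
and by Lemma \ref{lemma:log_partition} the denominator is exactly $\exp A(\bm{\mu})$.

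Next, I would interchange the gradient and the integral (justified under mild regularity assumptions on $h$ and $T$, e.g.\ the dominated convergence conditions that standardly hold for exponential family models), and apply $\frac{\partial}{\partial \bm{\mu}} \exp[\bm{\mu} \cdot T(\bm{x})] = T(\bm{x}) \exp[\bm{\mu} \cdot T(\bm{x})]$. This yields
\begin{align*}
\frac{\partial A(\bm{\mu})}{\partial \bm{\mu}} = \int T(\bm{x})\, h(\bm{x}) \exp\bigl[\bm{\mu} \cdot T(\bm{x}) - A(\bm{\mu})\bigr] d\bm{x}.
\end{align*}

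Finally, I would recognise the integrand as $T(\bm{x}) P(\bm{x}|\bm{\mu})$ using the definition of the exponential family, so the right-hand side collapses to $\mathbb{E}_{P(\bm{x}|\bm{\mu})}[T(\bm{x})]$, which is the desired identity. The only nontrivial step is justifying the swap of differentiation and integration; for the discrete or well-behaved continuous cases relevant to this paper (categorical, Dirichlet, gamma, Gaussian), this is standard and needs only a remark rather than a detailed argument.
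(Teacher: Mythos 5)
Your proposal is correct and follows essentially the same route as the paper's proof: differentiate the integral representation from Lemma \ref{lemma:log_partition}, apply the chain rule, identify the denominator with $\exp A(\bm{\mu})$, interchange differentiation and integration, and recognise the integrand as $T(\bm{x})P(\bm{x}|\bm{\mu})$. Your added remark on justifying the interchange (which the paper only gestures at via the fixed limits of integration) is a minor but welcome refinement.
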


\begin{proof}
Restarting with the derivative of the result of Lemma \ref{lemma:log_partition}:
\begin{align}
\frac{\partial A(\bm{\mu})}{\partial \bm{\mu}} = \frac{\partial}{\partial \bm{\mu}} \Bigg[ \ln \int h(\bm{x})\exp \Big[\bm{\mu} \cdot T(\bm{x}) \Big] d\bm{x} \Bigg],
\end{align}
and using the chain rule:
\begin{align}
\frac{\partial A(\bm{\mu})}{\partial \bm{\mu}} = \frac{1}{\int h(\bm{x})\exp\big[\bm{\mu} \cdot T(\bm{x}) \big] d\bm{x}} \frac{\partial}{\partial \bm{\mu}} \Bigg[ \int h(\bm{x})\exp \Big[\bm{\mu} \cdot T(\bm{x}) \Big] d\bm{x} \Bigg].
\end{align}
Note that the denominator of the first term is equal to the exponential of $A(\bm{\mu})$, and we can swap the derivative and the integral because the limit of integration does not depend on the parameters $\bm{\mu}$:
\begin{align}
\frac{\partial A(\bm{\mu})}{\partial \bm{\mu}} &= \frac{1}{\exp A(\bm{\mu})} \int \frac{\partial}{\partial \bm{\mu}} \bigg[ h(\bm{x})\exp \Big[\bm{\mu} \cdot T(\bm{x}) \Big] \bigg] d\bm{x}\\
&= \frac{1}{\exp A(\bm{\mu})} \int h(\bm{x}) \frac{\partial}{\partial \bm{\mu}} \bigg[ \exp \Big[\bm{\mu} \cdot T(\bm{x}) \Big] \bigg] d\bm{x}.
\end{align}
Using the chain rule again:
\begin{align}
\frac{\partial A(\bm{\mu})}{\partial \bm{\mu}} &= \frac{1}{\exp A(\bm{\mu})} \int h(\bm{x}) \exp \Big[\bm{\mu} \cdot T(\bm{x}) \Big] T(\bm{x}) d\bm{x}\\
&= \int h(\bm{x}) \exp \Big[\bm{\mu} \cdot T(\bm{x}) - A(\bm{\mu}) \Big] T(\bm{x}) d\bm{x}\\
&= \int P(\bm{x}|\bm{\mu}) T(\bm{x}) d\bm{x}\\
&= \mathbb{E}_{P(\bm{x}|\bm{\mu})} \Big[ T(\bm{x}) \Big].
\end{align}
\end{proof}

\begin{theorem}
If $\bm{D}$ is distributed according to a Dirichlet distribution $Q(\bm{D}) = \text{Dir}(\bm{D};\bm{\hat{d}})$, then:
$\bm{\mathring{D}} = \mathbb{E}_{Q(\bm{D})}\big[ \ln \bm{D} \big] \Leftrightarrow \bm{\mathring{D}}_i = \psi(d_i) - \psi(\sum_j d_j).$
\end{theorem}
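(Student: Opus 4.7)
The plan is to apply the two preceding lemmas directly: express the Dirichlet in exponential-family form so that $\ln \bm{D}_i$ appears as the $i$-th sufficient statistic, then use Lemma~\ref{lemma:expectation_gradient} to identify $\mathbb{E}[\ln \bm{D}_i]$ with the $i$-th partial derivative of the log partition, and finally compute that derivative explicitly to obtain the digamma expression.

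First, I would rewrite the Dirichlet density
\begin{align*}
Q(\bm{D}) = \text{Dir}(\bm{D};\bm{\hat{d}}) = \frac{1}{B(\bm{\hat{d}})}\prod_i \bm{D}_i^{\hat{d}_i-1}
\end{align*}
in the canonical exponential-family form of Definition~1. Taking logarithms and grouping terms, one sees that $h(\bm{D})=1$ (on the simplex), the natural parameters are $\bm{\mu}$ with $\mu_i = \hat{d}_i-1$, the sufficient statistics are $T_i(\bm{D}) = \ln \bm{D}_i$, and the log partition is
\begin{align*}
A(\bm{\mu}) = \sum_i \ln \Gamma(\hat{d}_i) - \ln \Gamma\bigl(\textstyle\sum_j \hat{d}_j\bigr),
\end{align*}
where I have used $\hat{d}_i = \mu_i + 1$ to re-express the normalizer $B(\bm{\hat{d}}) = \prod_i \Gamma(\hat{d}_i) / \Gamma(\sum_j \hat{d}_j)$. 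This step is essentially bookkeeping, and crucially it makes $\ln \bm{D}_i$ the object whose expectation we need.

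Next, I would invoke Lemma~\ref{lemma:expectation_gradient}, which states $\partial A(\bm{\mu})/\partial \mu_i = \mathbb{E}_{Q(\bm{D})}[T_i(\bm{D})] = \mathbb{E}_{Q(\bm{D})}[\ln \bm{D}_i] = \bm{\mathring{D}}_i$. Because $\hat{d}_i = \mu_i + 1$, the chain rule gives $\partial A/\partial \mu_i = \partial A/\partial \hat{d}_i$, so it suffices to differentiate the log-partition above with respect to $\hat{d}_i$. Using $\psi(x) = (d/dx)\ln \Gamma(x)$ and the fact that $\hat{d}_i$ appears once in the first sum and once inside the argument of $\Gamma(\sum_j \hat{d}_j)$, we obtain
\begin{align*}
\frac{\partial A}{\partial \hat{d}_i} = \psi(\hat{d}_i) - \psi\bigl(\textstyle\sum_j \hat{d}_j\bigr).
\end{align*}
Combining this with Lemma~\ref{lemma:expectation_gradient} yields $\bm{\mathring{D}}_i = \psi(\hat{d}_i) - \psi(\sum_j \hat{d}_j)$, as required.

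I do not anticipate a genuine obstacle here: the two lemmas have already been proved, and the only real content is identifying the Dirichlet with its exponential-family representation. The one point that deserves care is the minor notational discrepancy between the parameter name in the target identity ($d_i$) and the posterior-parameter name used in Section~6 ($\hat{d}_i$); I would just be explicit that the identity $\bm{\mathring{D}}_i = \psi(\hat{d}_i) - \psi(\sum_j \hat{d}_j)$ is what one actually proves, and that Table~1 uses $\hat{d}$ consistently.
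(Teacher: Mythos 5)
Your proof is correct and follows essentially the same route as the paper's: write the Dirichlet in exponential-family form with natural parameters $\bm{\mu} = \bm{\hat{d}} - \vec{1}$ and sufficient statistics $\ln \bm{D}_i$, apply Lemma~\ref{lemma:expectation_gradient} to identify $\bm{\mathring{D}}_i$ with $\partial A / \partial \mu_i$, and differentiate the log of the beta function to obtain the digamma expression. Your remark about the $d_i$ versus $\hat{d}_i$ notational slip in the theorem statement is also well taken.
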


\begin{proof}
\noindent Let $\bm{\mu}$ be equal to $\bm{\hat{d}} - \vec{1}$. Taking the exponential of both sides in Equation \ref{eq:D:0} and using that $\bm{\hat{d}} = \bm{\mu} + \vec{1}$, we obtain:
\begin{align}
Q(\bm{D}) &= \exp \Bigg\{ \underbrace{\begin{bmatrix}
\bm{\hat{d}}_1 - 1\\
...\\
\bm{\hat{d}}_{|S|} - 1
\end{bmatrix}}_{\bm{\mu}} \cdot
\underbrace{\begin{bmatrix}
\ln \bm{D}_1\\
...\\
\ln \bm{D}_{|S|}
\end{bmatrix}}_{T(\bm{D})} - 
\underbrace{\ln B(\bm{\mu} + \vec{1})}_{A(\bm{\mu})}\Bigg\},
\end{align}
where $\bm{\mu}$ is the vector of natural parameters, $T(\bm{D})$ is the vector of sufficient statistics, $A(\bm{\mu})$ is the log partition, and $B(\bigcdot)$ is the beta function. Using the result of Lemma \ref{lemma:expectation_gradient}:
\begin{align}
\bm{\mathring{D}} \delequal \mathbb{E}_{Q(\bm{D})}\Big[ T(\bm{D})\Big] = \mathbb{E}_{Q(\bm{D})}\Big[ \ln \bm{D} \Big] = \frac{\partial A(\bm{\mu})}{\partial \bm{\mu}}= \frac{\partial}{\partial \bm{\mu}} \Big[ \ln B(\bm{\mu} + \vec{1})\Big].
\end{align}
We now focus on a typical element of $\bm{\mathring{D}}$:
\begin{align}
\bm{\mathring{D}}_i = \frac{\partial}{\partial \mu_i}\Big[ \ln B(\bm{\mu} + \vec{1})\Big],
\end{align}
and use the definition of the beta function:
\begin{align}
\bm{\mathring{D}}_i &= \frac{\partial}{\partial \bm{\mu}_i}\bigg[\ln \frac{\prod_k \Gamma(\bm{\mu}_k + 1)}{\Gamma(\sum_k \bm{\mu}_k + 1)}\bigg]\\
&= \frac{\partial}{\partial \bm{\mu}_i}\bigg[\sum_k \ln \Gamma(\bm{\mu}_k + 1) - \ln \Gamma({\textstyle \sum_k \bm{\mu}_k + 1})\bigg]\\
&= \frac{\partial}{\partial \bm{\mu}_i}\bigg[ \ln \Gamma(\bm{\mu}_i + 1)\bigg] - \frac{\partial}{\partial \bm{\mu}_i}\bigg[\ln \Gamma({\textstyle \sum_k \bm{\mu}_k + 1})\bigg],
\end{align}
where $\Gamma(\bigcdot)$ is the gamma function. The last step relies on the chain rule:
\begin{align}
\bm{\mathring{D}}_i &= \frac{\partial}{\partial (\bm{\mu}_i + 1)}\bigg[ \ln \Gamma(\bm{\mu}_i + 1)\bigg] \underbrace{\frac{\partial \bm{\mu}_i + 1}{\partial \bm{\mu}_i}}_{=1} - \frac{\partial}{\partial ({\textstyle \sum_k \bm{\mu}_k + 1})}\bigg[\ln \Gamma({\textstyle \sum_k \bm{\mu}_k + 1})\bigg]\underbrace{\frac{\partial {\textstyle \sum_k \bm{\mu}_k + 1}}{\partial \bm{\mu}_i}}_{=1}\\
&= \frac{\partial}{\partial (\bm{\mu}_i + 1)}\bigg[ \ln \Gamma(\bm{\mu}_i + 1)\bigg] - \frac{\partial}{\partial ({\textstyle \sum_k \bm{\mu}_k + 1})}\bigg[\ln \Gamma({\textstyle \sum_k \bm{\mu}_k + 1})\bigg]\\
&= \psi(\bm{\mu}_i + 1) - \psi({\textstyle \sum_k \bm{\mu}_k + 1})\\
&= \psi(\bm{\hat{d}}_i) - \psi({\textstyle \sum_k \bm{\hat{d}}_k}),
\end{align}
where we used that $\bm{\hat{d}} = \bm{\mu} + \vec{1}$ and the definition of the digamma function, i.e., $\psi(x) = \frac{\partial \ln \Gamma(x)}{\partial x}$.
\end{proof}

\section*{Appendix E: Relationship between BTAI and active inference (Lemmas)}

\begin{lemma}\label{lemma_51}
Under the assumption that the probability of observations and states are independent of future actions, i.e.,
\begin{align}
\forall j \in \mathbb{N}_{>0}, \,\, Q(O_{t+i}|\pi_{i}) \approx Q(O_{t+i}|\pi_{i+j}) \text{ and } Q(S_{t+i}|\pi_{i}) \approx Q(S_{t+i}|\pi_{i+j}),
\end{align}
then:
\begin{align}
\forall j \in \mathbb{N}_{>0},\,\, \bm{G}(\pi_{i+j}, t+i) \approx \bm{G}(\pi_i, t+i).
\end{align}
\end{lemma}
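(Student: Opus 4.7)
The plan is to prove Lemma~\ref{lemma_51} by direct substitution into the definition of the per-time-step expected free energy. Since both hypotheses of the lemma assert invariance of certain marginals under extending the prefix policy $\pi_i$ to $\pi_{i+j}$, and since the summand $\bm{G}(\pi,\tau)$ depends on $\pi$ only through those same marginals, the claim should reduce to replacing each marginal by its approximate equivalent.

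First, I would recall the definition of the per-time-step EFE from \eqref{eq:efe_act_inf}, specialised to $\tau = t+i$ and $\pi = \pi_{i+j}$:
\begin{align*}
\bm{G}(\pi_{i+j},t+i) = D_{\mathrm{KL}}\!\left[Q(O_{t+i}\mid\pi_{i+j})\,\|\,P(O_{t+i})\right] + \mathbb{E}_{Q(S_{t+i}\mid\pi_{i+j})}\!\left[\mathrm{H}[P(O_{t+i}\mid S_{t+i})]\right].
\end{align*}
Note that the prior preferences $P(O_{t+i})$ and the likelihood $P(O_{t+i}\mid S_{t+i})$ do not depend on the policy at all, so only the two posterior marginals $Q(O_{t+i}\mid\pi_{i+j})$ and $Q(S_{t+i}\mid\pi_{i+j})$ carry policy dependence.

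Next, I would apply the two approximations from the hypothesis with the appropriate choice of the free index (namely using the given $j$ to relate $\pi_{i+j}$ back to $\pi_i$), obtaining $Q(O_{t+i}\mid\pi_{i+j}) \approx Q(O_{t+i}\mid\pi_i)$ and $Q(S_{t+i}\mid\pi_{i+j}) \approx Q(S_{t+i}\mid\pi_i)$. Substituting these into the first and second summand respectively yields
\begin{align*}
\bm{G}(\pi_{i+j},t+i) \approx D_{\mathrm{KL}}\!\left[Q(O_{t+i}\mid\pi_i)\,\|\,P(O_{t+i})\right] + \mathbb{E}_{Q(S_{t+i}\mid\pi_i)}\!\left[\mathrm{H}[P(O_{t+i}\mid S_{t+i})]\right],
\end{align*}
and the right-hand side is, by definition, $\bm{G}(\pi_i,t+i)$.

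There is no real obstacle here: the lemma is essentially a definitional unpacking once the hypothesis is stated. The only mildly subtle point worth commenting on is the meaning of ``$\approx$''---since KL divergence and entropy are continuous functionals of their argument distributions, replacing a distribution by an approximately equal one yields an approximately equal value, so the symbol $\approx$ propagates from the hypothesis to the conclusion without needing a quantitative bound. I would mention this briefly to justify carrying $\approx$ through the substitution, but would not attempt to quantify the error, since Theorem~\ref{th:abstract_AI} uses the lemma only at the same qualitative level.
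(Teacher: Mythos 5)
Your proof is correct and follows essentially the same route as the paper's: both start from the definition of $\bm{G}(\pi_{i+j},t+i)$, observe that only the marginals $Q(O_{t+i}|\cdot)$ and $Q(S_{t+i}|\cdot)$ carry policy dependence, and substitute the assumed approximations to recover $\bm{G}(\pi_i,t+i)$. Your added remark on continuity of the KL and entropy functionals to justify propagating $\approx$ is a small clarification beyond what the paper states, but does not change the argument.
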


\begin{proof}
The proof is straightforward, we start with the following definition:
\begin{align}
\bm{G}(\pi_{i+j}, t+i) = D_{\mathrm{KL}}[Q(O_{t+i}|\pi_{i+j})||P(O_{t+i})]\,\, +\,\, \mathbb{E}_{Q(S_{t+i}|\pi_{i+j})}[\text{H}[P(O_{t+i}| S_{t+i})]].
\end{align}
Then, using the assumption that the probability of observations and states are independent of future actions, i.e., $\forall j \in \mathbb{N}_{>0},\,\, Q(O_{t+i}|\pi_{i+j}) \approx Q(O_{t+i}|\pi_{i})$ and $\forall j \in \mathbb{N}_{>0},\,\, Q(S_{t+i}|\pi_{i+j}) \approx Q(S_{t+i}|\pi_i)$, we get:
\begin{align}
\bm{G}(\pi_{i+j}, t+i) \approx D_{\mathrm{KL}}[Q(O_{t+i}|\pi_i)||P(O_{t+i})]\,\, +\,\, \mathbb{E}_{Q(S_{t+i}|\pi_i)}[\text{H}[P(O_{t+i}| S_{t+i})]] \delequal \bm{G}(\pi_i, t+i).
\end{align}
\end{proof}

\begin{lemma}\label{lemma_52}
The aggregated cost for an arbitrary $N$ is given by:
\begin{align}
\bm{G}^{aggre}_{\pi_N} = \sum_{i=1}^{N} \bm{G}(\pi_i, t + i),
\end{align}
\end{lemma}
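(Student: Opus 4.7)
The plan is to prove Lemma \ref{lemma_52} by straightforward induction on $N$, the size of the policy, since the aggregated cost is itself defined recursively in Definition \ref{def:aggregated_efe}. The statement has a very natural telescoping structure: unfolding the recursion $\bm{G}^{aggre}_{\pi_N} = \bm{G}^{aggre}_{\pi_{N-1}} + \bm{G}^{local}_{\pi_N}$ should directly produce the sum $\sum_{i=1}^{N} \bm{G}^{local}_{\pi_i}$, and then I just need to invoke Definition \ref{def:localized_efe} which identifies $\bm{G}^{local}_{\pi_i}$ with $\bm{G}(\pi_i, t+i)$.

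For the base case, I would take $N = 1$ (or equivalently $N = 0$ interpreting the empty sum as zero). At $N = 1$, Definition \ref{def:aggregated_efe} gives $\bm{G}^{aggre}_{\pi_1} = \bm{G}^{aggre}_{\pi_0} + \bm{G}^{local}_{\pi_1} = 0 + \bm{G}^{local}_{\pi_1}$, which by Definition \ref{def:localized_efe} equals $\bm{G}(\pi_1, t+1)$, matching the RHS $\sum_{i=1}^{1} \bm{G}(\pi_i, t+i)$.

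For the inductive step, I would assume the identity holds for some $N - 1 \geq 1$, namely $\bm{G}^{aggre}_{\pi_{N-1}} = \sum_{i=1}^{N-1} \bm{G}(\pi_i, t+i)$. Applying Definition \ref{def:aggregated_efe} at level $N$ and substituting the inductive hypothesis yields
\begin{align*}
\bm{G}^{aggre}_{\pi_N} \;=\; \bm{G}^{aggre}_{\pi_{N-1}} + \bm{G}^{local}_{\pi_N} \;=\; \sum_{i=1}^{N-1} \bm{G}(\pi_i, t+i) \;+\; \bm{G}(\pi_N, t+N) \;=\; \sum_{i=1}^{N} \bm{G}(\pi_i, t+i),
\end{align*}
where the last equality just absorbs the final term into the sum. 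This completes the induction.

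Since the argument is purely a mechanical unfolding of two definitions, there is no real obstacle here; the only thing to be careful about is a consistent indexing convention (the policy subscript $i$ on $\pi_i$ matches the time offset $t+i$ under Lemma \ref{lemma_51}'s assumption, which is what allowed Equation \eqref{eq:efe_rearranged_4242} to replace $\pi$ by $\pi_N$ in the first place). No new assumptions beyond those already in force for Theorem \ref{th:abstract_AI} are needed for this lemma, and in fact this lemma itself does not require the independence-of-future-actions assumption—it is a purely bookkeeping identity about the recursive Definition \ref{def:aggregated_efe}.
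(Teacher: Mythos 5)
Your proof is correct and follows essentially the same route as the paper's: induction on $N$, unfolding the recursive Definition \ref{def:aggregated_efe} and identifying each increment with $\bm{G}(\pi_i, t+i)$ via Definition \ref{def:localized_efe}. Your closing observation that the lemma is a pure bookkeeping identity not requiring the independence-of-future-actions assumption is also accurate (that assumption enters only through Lemma \ref{lemma_51} in the proof of Theorem \ref{th:abstract_AI}).
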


\begin{proof}
The proof is done by induction. The initialisation holds for $N = 1$, indeed, $\pi_{1} = \{ U_t \}$ and by definition:
\begin{align}
\underbrace{\bm{G}(\pi_{1})}_{\bm{G}_{\pi_{1}}^{aggre}} = &\sum_{\tau = t+1}^{t+1} \bm{G}(\pi_{1}, \tau) = \underbrace{D_{\mathrm{KL}}[Q(O_{t+1}|\pi_{1})||P(O_{t+1})]\,\, +\,\, \mathbb{E}_{Q(S_{t+1}|\pi_{1})}[\text{H}[P(O_{t+1} | S_{t+1})]]}_{\bm{G}_{\pi_{1}}^{local}},\\
&\Leftrightarrow \bm{G}_{\pi_{1}}^{aggre} = \bm{G}_{\pi_{0}}^{aggre} + \bm{G}_{\pi_{1}}^{local},
\end{align}
because by definition $\bm{G}_{\pi_{0}}^{aggre} = 0$. Then, assuming that $\bm{G}^{aggre}_{\pi_N} = \sum_{i=1}^{N} \bm{G}(\pi_i, t + i)$ holds for some $N$, we show that its hold for $N+1$ as well. By definition:
\begin{align}
\bm{G}^{aggre}_{\pi_{N+1}} = \bm{G}^{aggre}_{\pi_N} + \bm{G}^{local}_{\pi_{N+1}},
\end{align}
and:
\begin{align}
\bm{G}^{local}_{\pi_{N+1}} = \bm{G}(\pi_{N+1},t+N+1).
\end{align}
Using the inductive hypothesis and the above two equations:
\begin{align}
\bm{G}^{aggre}_{\pi_{N+1}} &= \sum_{i=1}^{N} \bm{G}(\pi_i, t + i) + \bm{G}(\pi_{N+1},t+N+1)\\
&= \sum_{i=1}^{N+1} \bm{G}(\pi_i, t + i).
\end{align}
\end{proof}

\section*{Appendix F: Notation}

In this appendix, we introduce the notation used throughout this paper. The following sub-sections describe the notation related to sets of numbers, tensors, probability distributions, global variables, multi-indices and random variables, respectively.

\subsection*{Sets of numbers}

\begin{definition}
Let $\mathbb{N}_{>0}$ be the set of all strictly positive integers defined as:
\begin{align}
\mathbb{N}_{>0} = \{ x \in \mathbb{N} \mid x > 0 \},
\end{align}
where $\mathbb{N}$ is the set of all natural numbers.
\end{definition}

\begin{definition}
Let $\mathbb{R}_{>0}$ be the set of all strictly positive real numbers defined as:
\begin{align}
\mathbb{R}_{>0} = \{ x \in \mathbb{R} \mid x > 0 \},
\end{align}
where $\mathbb{R}$ is the set of all real numbers.
\end{definition}

\subsection*{Tensors}

\begin{definition}
An $\bm{n}$\textbf{-tensor} is an $n$-dimensional array of values. Each element of an $n$-tensor is indexed by an $n$-tuple of non-negative integers, i.e., $(x_1, ..., x_n)$ where $x_i \in \mathbb{N}_{>0} \,\, \forall i \in \{1,2,...,n\}$.
\end{definition}

\begin{definition}
Let $T$ be an $n$-tensor. The \textbf{element} of $T$ indexed by the $n$-tuple $(x_1, ..., x_n)$ is a real number denoted by $T(x_1, ..., x_n)$.
\end{definition}

\begin{example}
Let $T$ be a 2-tensor defined as:
\begin{align}
T = \begin{bmatrix}
1 & 2\\
3 & 4
\end{bmatrix}.
\end{align}
Then $T(1,1) = 1$, $T(1,2) = 2$, $T(2,1) = 3$ and $T(2,2) = 4$.
\end{example}

\begin{remark}
A $0$-tensor is a scalar, a $1$-tensor is a vector, and a $2$-tensor is a matrix.
\end{remark}

\begin{definition}
Let $T$ be an $n$-tensor. The \textbf{size} of $T$ is a vector of size $n$ denoted $|T|$ whose $i$-th element corresponds to the size of the $i$-th dimension of $T$.
\end{definition}

\begin{example}
Let $T$ be a 2-tensor defined as:
\begin{align}
T = \begin{bmatrix}
1 & 2 & 3\\
4 & 5 & 6
\end{bmatrix}.
\end{align}
Then $|T|_1 = 2$ and $|T|_2 = 3$.
\end{example}

\begin{definition}
Let $T$ be an $n$-tensor. A \textbf{1-sub-tensor} of $T$ is a 1-tensor obtained by selecting a 1-dimensional slice of $T$, i.e., 
\begin{align}
T(x_1, ..., x_{i-1}, \bigcdot, x_{i+1}, ,..., x_n),
\end{align}
where $\bigcdot$ represents the selection of a $1$-dimensional slice of $T$, and the values of all $x_{j \neq i}$ must be set to specific values in $\{1, ..., |T|_j\}$. Figure \ref{fig:tensor-slices} (left) illustrates the notion of a 1-dimensional slice.
\end{definition}

\begin{figure}[H]
	\begin{center}
	\includegraphics[scale=0.8]{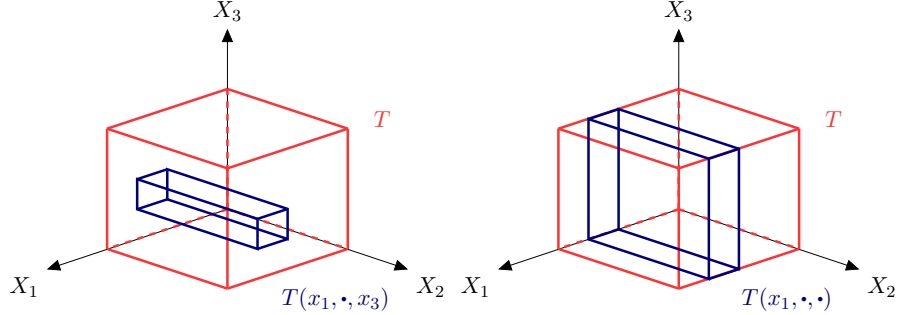}
	\end{center}
  \caption{This figure illustrates the notion of a $1$-dimensional slice (on the left) and of a $2$-dimensional slice (on the right).}
   \label{fig:tensor-slices}
\end{figure}

\begin{definition}
Let $T$ be an $n$-tensor and $m < n$. An $\bm{m}$\textbf{-sub-tensor} of $T$ (denoted $W$) is an $m$-tensor obtained by selecting an $m$-dimensional slice of $T$, i.e., 
\begin{align}
W = T(x_1, ..., x_{i_1-1}, \bigcdot, x_{i_1+1},......, x_{i_m-1}, \bigcdot, x_{i_m+1},..., x_n),
\end{align}
where $i_k \in \{1, ..., n\} \,\, \forall k \in \{1, ..., m\}$ are indices representing the dimension being selected. Naturally, the $k$-th dimension of $W$ corresponds to the $i_k$-th dimension of $T$ for $k \in \{ 1, ..., m\}$. Importantly, the sextuple of dots in the middle of the expression represents that there will be $m$ symbols $``\bigcdot"$, i.e., one for each dimension selected. Figure \ref{fig:tensor-slices} (right) illustrates the special case of a $2$-dimensional slice, i.e., $m = 2$.
\end{definition}

\begin{example}
Let $T$ be a 3-tensor such that:
\begin{itemize}
\item $|T|_1 = 2$
\item $T(1,x_2,x_3) = 1, \,\, \forall (x_2,x_3) \in \{1, ..., |T|_2\} \times \{1, ..., |T|_3\}$
\item $T(2,x_2,x_3) = 2, \,\, \forall (x_2,x_3) \in \{1, ..., |T|_2\} \times \{1, ..., |T|_3\}$.
\end{itemize}
Then $T(1,\bigcdot,\bigcdot)$ is a 2-sub-tensor of $T$ full of ones, and $T(2,\bigcdot,\bigcdot)$ is a 2-sub-tensor of $T$ full of twos.
\end{example}

\subsection*{Probability distributions}

\begin{definition}
A \textbf{random} $\bm{n}$\textbf{-tensor} is an $n$-tensor over which we have an n-dimensional probability distribution.
\end{definition}

\begin{remark}
A random variable is a random 0-tensor, a random vector is a random 1-tensor, and a random matrix is a random 2-tensor.
\end{remark}

\begin{definition}\label{def:tensor_describes_joint_dist}
An $n$-tensor $T$ is said to \textbf{represent a joint distribution} over a set of $n$ random variables $\{X_1, ..., X_n\}$ if:
\begin{align}
P(X_1 = x_1, ..., X_n = x_n) = T(x_1, ..., x_n).
\end{align}
For conciseness, if $T$ represents $P(X_1, ..., X_n)$ we let:
\begin{align}
P(X_1, ..., X_n) = \text{Cat}(T).
\end{align}
\end{definition}

\begin{remark}
If $T$ represents $P(X_1, ..., X_n)$, then the sum of its elements must equal one.
\end{remark}

\begin{remark}
In contrast, if $T$ is a \textbf{random} $n$-tensor, then:
\begin{align}
P(X_1, ..., X_n|T) = \text{Cat}(T),
\end{align}
which means that the joint probability over $X_1, ..., X_n$ is represented by $T$, and because $T$ is a \textbf{random} tensor (taking values in the set of valid $n$-tensors $\mathbb{T}_n$, i.e., the set of all $n$-tensors whose elements sum up to one), we must specify which instance of $T \in \mathbb{T}_n$ should be used to define the joint distribution over $X_1, ..., X_n$. 
\end{remark}

\begin{definition}\label{def:tensor_describes_cond_dist}
An $m$-tensor $R$ is said to \textbf{represent a conditional distribution} over a set of $m$ random variables $\{X_1, ..., X_m\}$ if:
\begin{align}
P(X_1 = x_1, ..., X_n = x_n | X_{n+1} = x_{n+1}, ..., X_m = x_m) = R(x_1, ..., x_m).
\end{align}
For conciseness, if $R$ represents $P(X_1, ..., X_n | X_{n+1}, ..., X_m)$ we let:
\begin{align}
P(X_1, ..., X_n | X_{n+1}, ..., X_m) = Cat(R).
\end{align}
\end{definition}

\begin{remark}
If $R$ represents $P(X_1, ..., X_n | X_{n+1}, ..., X_m)$, then the elements of the $n$-sub-tensor $R(\bigcdot, ..., \bigcdot, x_{n+1}, ..., x_m)$ must sum to one $\forall x_i \in \{1, ..., |R|_i\}\,\, \forall i \in \{n+1, ..., m\}$.
\end{remark}

\begin{remark}
If $R$ is a random $m$-tensor, then:
\begin{align}
P(X_1, ..., X_n | X_{n+1}, ..., X_m, R) = Cat(R).
\end{align}
\end{remark}

\begin{remark}
Importantly, definition \ref{def:tensor_describes_cond_dist} uses the symbol $T$ to represent $P(X_1, ..., X_n)$ and definition \ref{def:tensor_describes_joint_dist} uses the symbol $R$ to represent 
$P(X_1, ..., X_n | X_{n+1}, ..., X_m)$. Throughout this document, different symbols will be used for representing joint and conditional distributions.
\end{remark}

\begin{definition} \label{def:dirichlet_prior}
Let $R$ be a random $m$-tensor representing $P(X_1, ..., X_n | X_{n+1}, ..., X_m, R)$ and $k = m - n$ be the number of variables upon which the variables $X_1, ..., X_n$ are conditioned. Having a \textbf{Dirichlet prior} over $R$ means that:
\begin{align}
P(R) = \prod_{i_1 = 1}^{|R|_{n+1}} ... \prod_{i_k = 1}^{|R|_m} Dir\Big(\bm{r}(i_1, ..., i_k, \bigcdot ) \Big),
\end{align}
where $\bm{r}$ is an $(m + 1)$-tensor such that the $1$-sub-tensor $\bm{r}(i_1, ..., i_k, \bigcdot)$ contains the parameters of the Dirichlet prior over $P(X_1, ..., X_n | X_{n+1} = i_1, ..., X_m = i_k)$. For conciseness, we denote the prior over $R$ as:
\begin{align}
P(R) = Dir(\bm{r}).
\end{align}
\end{definition}

\begin{remark}
Definition \ref{def:dirichlet_prior} implicitly means that if $V$ is a 1-tensor then $Dir(V)$ represents a Dirichlet distribution. However, if $V$ is an $m$-tensor (with $m \neq 1$) then $Dir(V)$ represents a product of Dirichlet distributions.
\end{remark}

\begin{remark}
If $R$ is a random $m$-tensor representing $P(X_1, ..., X_n | X_{n+1}, ..., X_m, R)$, then its prior will be a product of $|X_{n+1}| \times ... \times |X_m|$ Dirichlet distributions, where $|X_i|$ is the number of values that $X_i$ can take. Additionally, each Dirichlet distribution will have $|X_1| \times ... \times |X_n|$ parameters stored in the last dimension of $\bm{r}$, where $\bm{r}$ is the tensor storing the parameters of the prior over $R$, i.e. $P(R) = Dir(\bm{r})$.
\end{remark}

\begin{example}
Let $\bm{A}$ be a random 2-tensor representing $P(O|S,\bm{A})$, then the Dirichlet prior over $\bm{A}$ is given by:
\begin{align}
P(\bm{A}) = Dir(\bm{a}) \delequal \prod_{i = 1}^{|\bm{A}|_2} Dir\Big(\bm{a}(i, \bigcdot)\Big),
\end{align}
where $|\bm{A}|_2$ is the number of values that $S$ can take (i.e., the number of hidden states).
\end{example}

\begin{example}
Let $\bm{B}$ be a random 3-tensor representing $P(S_{\tau + 1}|S_\tau, U_\tau, \bm{B})$, then the Dirichlet prior over $\bm{B}$ is given by:
\begin{align}
P(\bm{B}) = Dir(\bm{b}) \delequal \prod_{i_1 = 1}^{|\bm{B}|_2} \prod_{i_2 = 1}^{|\bm{B}|_3} Dir\Big(\bm{b}(i_1, i_2, \bigcdot)\Big),
\end{align}
where $|\bm{B}|_2$ is the number of values that $S_\tau$ can take (i.e., the number of hidden states) and $|\bm{B}|_3$ is the number of values that $U_\tau$ can take (i.e., the number of actions).
\end{example}

\subsection*{Global labels}

\begin{definition}
The \textbf{number of actions} available to the agent is denoted $|U|$.
\end{definition}

\begin{definition}
The \textbf{number of states} in the environment is denoted $|S|$.
\end{definition}

\begin{definition}
The \textbf{number of observations} that the agent can make is denoted $|O|$.
\end{definition}

\begin{definition}
The \textbf{number of policies} that the agent can pick from is denoted $|\pi|$.
\end{definition}

\begin{definition}
The time point representing the \textbf{present} is a natural number denoted $t$.
\end{definition}

\begin{definition}
The \textbf{time-horizon} (i.e., the time point after which the agent stops modelling the sequence of hidden states) is a natural number denoted $T$.
\end{definition}

\subsection*{Multi-indices}

\begin{definition}
A \textbf{multi-index} is a sequence of indices denoted by:
\begin{align}
I = (i_1, ..., i_n),
\end{align}
where $i_j \in \mathcal{D} \,\,\forall j \in \{1, ..., n\}$, and in this paper $\mathcal{D} = \{1, ..., |U|\}$.
\end{definition}

\begin{remark}
Multi-indices are used to index random variables such that $S_I$ is the hidden state obtained after taking the sequence of actions described by $I$, and $O_I$ is the random variable representing the observation generated by $S_I$.
\end{remark}

\begin{definition}
The \textbf{last index} of a multi-index is denoted $I_{last}$, i.e., $I_{last}$ is the last element of the sequence $I$.
\end{definition}

\begin{definition}
The \textbf{one-hot representation} of the action corresponding to $I_{last}$ is denoted $\vec{I}_{last}$.
\end{definition}

\begin{definition}
Given a multi-index $I$, $\IdMLast{I}$ corresponds to the sequence of actions described by $\bm{I}$ \textbf{without the last element}.
\end{definition}

\begin{remark}
In Section \ref{sec:ai_ts}, when a hidden state (i.e., $S_I$) is indexed by $I$, then $S_{\IdMLast{I}}$ will be the parent of $S_I$.
\end{remark}

\begin{definition}
Given an expandable generative model, $\mathbb{I}_t$ is the set of \textbf{all multi-indices already expanded} from the current state $S_t$.
\end{definition}

\begin{remark}
In Section \ref{sec:ai_ts} each time a hidden state (i.e., $S_I$) is added to the generative model, $I$ is added to the set of all multi-indices already expanded $\mathbb{I}_t$.
\end{remark}

\subsection*{Random variables and parameters of their distributions}

\begin{remark}
Parameters of the posterior distributions are recognizable by the hat notation, e.g., $\bm{\hat{a}}$, $\bm{\hat{b}}$ and $\bm{\hat{d}}$ will be posterior parameters, while $\bm{a}$, $\bm{b}$ and $\bm{d}$ will be prior parameters.
\end{remark}

\begin{remark}
The expected logarithm of an arbitrary tensor $\bm{X}$ representing a conditional or a joint distribution is denoted $\bm{\bar{X}}$, e.g. $\bm{\bar{A}} = \mathbb{E}_{Q(\bm{A})}[\ln \bm{A}]$ and $\bm{\bar{B}} = \mathbb{E}_{Q(\bm{B})}[\ln \bm{B}]$.
\end{remark}

\begin{definition}
Let $U_\tau$ be a random variable taking values in $\{1, ..., |U|\}$ indexing all possible actions. The prior distribution over $U_\tau$ is a categorical distribution represented by $\bm{\Theta}_\tau$. The posterior distribution over $U_\tau$ is a categorical distribution represented by $\bm{\hat{\Theta}}_\tau$.
\end{definition}

\begin{definition}
Let $S_\tau$ be a random variable taking values in $\{1, ..., |S|\}$ indexing all possible states. The prior and posterior distributions over $S_\tau$ are categorical distributions represented by different tensors depending on the generative model being considered. Therefore, those distributions are defined in Sections \ref{sec:ai} and \ref{sec:ai_ts}.
\end{definition}

\begin{definition}
Let $O_\tau$ be an observed random variable taking values in $\{1, ..., |O|\}$ indexing all possible observations. The prior distribution over $O_\tau$ is a categorical distribution conditioned on $S_\tau$ and represented by the random matrix $\bm{A}$. There is no posterior distribution over $O_\tau$ because $O_\tau$ is observed, i.e., realised.
\end{definition}

\begin{definition}
Let $O_I$ be a random variable taking values in $\{1, ..., |O|\}$ indexing all possible observations. The prior distribution over $O_I$ is a categorical distribution conditioned on $S_I$ and represented by the random matrix $\bm{\bar{A}}$. Note that $O_I$ refers to an observation in the future and is therefore a hidden variable. The posterior distribution over $O_I$ is a categorical distribution represented by $\bm{\hat{E}}_I$.
\end{definition}

\begin{definition}
Let $S_I$ be an random variable taking values in $\{1, ..., |S|\}$ indexing all possible states. The prior distribution over $S_I$ is a categorical distribution conditioned on $S_\IdMLast{I}$ and represented by the matrix $\bm{\bar{B}}_I \delequal \bm{\bar{B}}(\bigcdot, \bigcdot, I_{last})$. The posterior distribution over $S_I$ is a categorical distribution represented by $\bm{\hat{D}}_I$.
\end{definition}

\begin{definition}
Let $\bm{A}$ be a $|O| \times |S|$ random matrix defining the probability of an observation $O_\tau$ given the hidden state $S_\tau$. The prior distribution over $\bm{A}$ is a product of Dirichlet distributions whose parameters are stored in a $|S| \times |O|$ matrix $\bm{a}$. The posterior distributions over $\bm{A}$ is also a product of Dirichlet distribution, but the parameters are stored in a $|S| \times |O|$ matrix $\bm{\hat{a}}$.
\end{definition}

\begin{definition}
Let $\bm{B}$ be a $|S| \times |S| \times |U|$ random 3-tensor defining the probability of transiting from $S_\tau$ to $S_{\tau+1}$ when taking action $U_\tau$. The prior distribution over $\bm{B}$ is a product of Dirichlet distributions whose parameters are stored in a $|S| \times |U| \times |S|$ 3-tensor $\bm{b}$. The posterior distribution over $\bm{B}$ is also a product of Dirichlet distributions, but the parameters are stored in a $|S| \times |U| \times |S|$ 3-tensor $\bm{\hat{b}}$.
\end{definition}

\begin{definition}
Let $\bm{D}$ be a random vector of size $|S|$ defining the probability of the initial state $S_0$. The prior distribution over $\bm{D}$ is a Dirichlet distribution whose parameters are stored in a vector $\bm{d}$ of size $|S|$. The posterior distribution over $\bm{D}$ is also a Dirichlet distribution, but the parameters are stored in a vector $\bm{\hat{d}}$ of size $|S|$.
\end{definition}

\begin{definition}
Let $\bm{\Theta}_\tau \,\, \forall \tau \in \{0, ..., t-1\}$ be a random vector of size $|U|$ defining the probability of the action $U_\tau$. The prior distribution over $\bm{\Theta}_\tau$ is a Dirichlet distribution whose parameters are stored in a vector $\bm{\theta}_\tau$ of size $|U|$. The posterior distribution over $\bm{\Theta}_\tau$ is also a Dirichlet distribution, but the parameters are stored in a vector $\bm{\hat{\theta}}_\tau$ of size $|U|$.
\end{definition}

\begin{definition}
Let $\gamma$ be a random variable taking values in $\mathbb{R}_{>0}$. The prior distribution over $\gamma$ is a gamma distribution with shape parameter $\bm{\alpha} = 1$ and rate parameter $\bm{\beta} \in \mathbb{R}_{>0}$. The posterior distribution over $\gamma$ is a gamma distribution with shape parameter $\bm{\hat{\alpha}} = 1$ and rate parameter $\bm{\hat{\beta}} \in \mathbb{R}_{>0}$.
\end{definition}

\begin{definition}
Let $\pi$ be a random variable taking values in $\{1, ..., |\pi|\}$ indexing all possible policies. The prior distribution over $\pi$ is a softmax function of the vector $\bm{G}$ multiplied by minus the precision $\gamma$. Note that $\bm{G}$ is a vector of size $|\pi|$ whose $i$-th element is the expected free energy of the $i$-th policy. The posterior distribution over $\pi$ is a categorical distribution whose parameters are stored in a vector $\bm{\hat{\pi}}$ of size $|\pi|$.
\end{definition}

\section*{Appendix G: Multi-armed bandit problem}

In the multi-armed bandit problem, the agent is prompted with $K$ actions (one for each bandit's arm). Pulling the $i$-th arm returns a reward sampled from the reward distribution $P_i(X)$ associated to this arm. Let $\mu_i$ be the mean of the $i$-th reward distribution and $T_i(n)$ be the number of times the $i$-th bandit has been selected after $n$ plays. To solve the bandit problem, one needs to come up with an \textit{allocation strategy} that selects the action that minimises the agent's regret defined as:
\begin{align}
R_n = \mu^*n  - \sum_{i=1}^K \mu_i \mathbb{E} [T_i(n)],
\end{align}
where $\mu^*$ is the average reward of the best action. Note that an upper bound of $\mathbb{E} [T_i(n)]$ is derived by first upper bounding $T_i(n)$, and then using: the Chernoff-Hoeffing bound, the Bernstein inequality and some properties of $p$-series, c.f., proof of Theorem 1 in \citet{Auer2002} for details. So, if we first assume that,
\begin{align}
UCB1_i = \underbrace{{\color{white}\sqrt{\frac{1}{1}}} \bar{X}_i{\color{white}\sqrt{\frac{1}{1}}}}_{\text{exploitation}} + \underbrace{\sqrt{\frac{2 \ln n}{n_i}}}_{\text{exploration}},
\end{align}
where $n_i$ is the number of times the $i$-th action has been selected, and $\bar{X}_i$ is the average reward received after taking the $i$-th action. Then, the main result of \citet{Auer2002} was to show that if an allocation strategy was using the UCB1 criterion to select the next action, the expected regret of this allocation strategy will grow at most logarithmically in the number of plays $n$, i.e., $\mathcal{O}(\ln n)$. In addition, since it is known that the expected regret of the (best) allocation strategy grows at least logarithmically in $n$ \citep{UBC1_nln_opt}, we say that the UCB1 criterion resolves the exploration / exploitation trade-off, i.e., the UCB1 criterion ensures that the expected regret grows as slowly as possible.

\section*{Appendix H: The exponential complexity class}

In this appendix, we precisely pinpoint the exponential complexity class that is addressed in this paper, but first, we introduce a multi-index notation. Multi-indices will help us to refer to hidden states in the future. Naturally enough the indexes inside the multi-indices will correspond to the actions the agent will have to perform to reach the hidden state, e.g., $S_{(123)}$ corresponds to a hidden state at time $t + 3$ obtained by performing action 1 at time $t$, 2 at time $t + 1$ and 3 at time $t + 2$. Using this notation, Figure \ref{fig:exp_class} depicts all the possible policies up to two time steps in the future and the associated hidden states. Importantly, Figure \ref{fig:exp_class} shows that the number of policies grows exponentially with the number of time steps for which the agent tries to plan. Therefore, the definition of the prior over the policies, i.e., $P(\pi|\gamma) = \sigma(-\gamma \bm{G})$, exhibits an exponential space and time complexity class because the agent needs to store and compute the $|U|^T$ parameters of $P(\pi|\gamma)$, where $T$ is the time-horizon.

\begin{figure}[H]
	\begin{center}
	\includegraphics[scale=0.8]{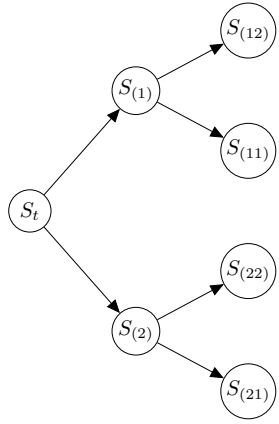}
	\end{center}
    \caption{Illustration of all possible policies up to two time steps in the future when $|U| = 2$. The state at the current time step is denoted by $S_t$. Additionally, each branch of the tree corresponds to a possible policy, and each node $S_I$ is indexed by a multi-index (e.g. $I=(12)$) representing the sequence of actions that led to this state. This should make it clear that for one time step in the future, there are $|U|$ possible policies, after two time steps there are $|U|$ times more policies, and so on until the time-horizon $T$ where there are a total of $|U|^T$ possible policies, i.e., the number of possible policies grows exponentially with the number of time steps for which the agent tries to plan.}
    \label{fig:exp_class}
\end{figure}

To show that this exponential explosion is not only a theoretical problem and also appears in practice, we modified the $DEMO\_MDP\_maze.m$ of the SPM\footnote{Statistical parametric mapping (SPM) is a software package created by the Wellcome Department of Imaging Neuroscience at University College London, which was initially developed to carry out statistical analyses of functional neuroimaging data. Today, SPM also contains MatLab simulations implementing active inference agents (among other things), c.f. \url{https://www.fil.ion.ucl.ac.uk/spm/}. } package in two ways. First, we allowed the agent to plan $N$ time steps in the future as follows:
\begin{Verbatim}[commandchars=\\\{\},codes={\catcode`\$=3\catcode`\^=7\catcode`\_=8}]
\PYG{c}{\PYGZpc{} V = Allowable policies (N moves into the future), nu = number of actions}
\PYG{n}{V} \PYG{p}{=} \PYG{p}{[];}
\PYG{k}{for} \PYG{n}{i1} \PYG{p}{=} \PYG{l+m+mi}{1}\PYG{p}{:}\PYG{n}{nu}\PYG{p}{,} \PYG{n}{i2} \PYG{p}{=} \PYG{l+m+mi}{1}\PYG{p}{:}\PYG{n}{nu}\PYG{p}{,} \PYG{p}{..,} \PYG{n}{iN} \PYG{p}{=} \PYG{l+m+mi}{1}\PYG{p}{:}\PYG{n}{nu}
      \PYG{n}{V}\PYG{p}{(:,}\PYG{k}{end} \PYG{o}{+} \PYG{l+m+mi}{1}\PYG{p}{)} \PYG{p}{=} \PYG{p}{[}\PYG{n}{i1}\PYG{p}{;}\PYG{n}{i2}\PYG{p}{;} \PYG{p}{..} \PYG{p}{;}\PYG{n}{iN}\PYG{p}{];}
\PYG{k}{end}
\end{Verbatim}

Second, we used the ``tic'' and ``toc'' functions provided by Matlab to track the execution time required to execute the function ``$spm\_maze\_search$'' where the parameters of the model are assumed to be known already by the agent:

\begin{Verbatim}[commandchars=\\\{\},codes={\catcode`\$=3\catcode`\^=7\catcode`\_=8}]
\PYG{n}{tic} \PYG{c}{\PYGZpc{} Start a timer used to evaluate the execution time of spm\PYGZus{}maze\PYGZus{}search}
\PYG{n}{MDP} \PYG{p}{=} \PYG{n}{spm\PYGZus{}maze\PYGZus{}search}\PYG{p}{(}\PYG{n}{mdp}\PYG{p}{,}\PYG{l+m+mi}{8}\PYG{p}{,}\PYG{n}{START}\PYG{p}{,}\PYG{n}{END}\PYG{p}{,}\PYG{l+m+mi}{128}\PYG{p}{,}\PYG{l+m+mi}{1}\PYG{p}{);}
\PYG{n}{toc} \PYG{c}{\PYGZpc{} Display the time elapsed since the last call to tic}
\end{Verbatim}

Figure \ref{fig:exp_explosion} presents the results of our simulations for $N$ from 2 to 6. Under a logarithmic scale on the time axis, the experimental results show that the graph is almost a perfect line, which provides empirical evidence for an exponential time explosion. Note that the simulation for $N = 7$ crashed after trying to allocate an array of 9.5GB (space explosion). In section \ref{sec:MCTS}, we presented an approach proposed to deal with the exponential complexity class that arises during planing, yet is fundamentally similiar to active inference. This effectively means that our paper shows how standard active inference can be made more efficient and scale to longer time horizons.

\begin{figure}[H]
\begin{center}
	\includegraphics[scale=0.8]{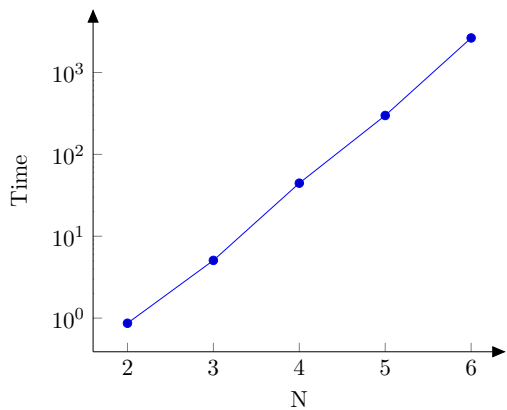}
\end{center}
\caption{This figure shows the time required to execute the function ``$spm\_maze\_search$'' when the agent is allowed to plan $N$ time steps in the future (for $N$ from 2 to 6). A logarithmic scale is used on the time axis.}
\label{fig:exp_explosion}
\end{figure}

\end{document}